\documentclass{article}

\PassOptionsToPackage{round}{natbib}

\usepackage[final]{neurips_2023}

\usepackage[utf8]{inputenc} %
\usepackage[T1]{fontenc}    %
\usepackage{hyperref}       %
\usepackage{url}            %
\usepackage{booktabs}       %
\usepackage{amsfonts}       %
\usepackage{nicefrac}       %
\usepackage{microtype}      %
\usepackage{xcolor}         %

\usepackage{amsmath}
\usepackage{amssymb}
\usepackage{mathtools}
\usepackage{amsthm}

\theoremstyle{plain}
\newtheorem{theorem}{Theorem}[section]

\newtheorem{lemma}[theorem]{Lemma}

\theoremstyle{definition}

\theoremstyle{remark}

\usepackage{dm-colors}
\usepackage[nameinlink]{cleveref}

\hypersetup{
  colorlinks   = true, %
  allcolors    = gdmblue900,
}

\usepackage{xspace}
\usepackage{wrapfig}
\usepackage{rotating}
\usepackage{paralist}  %
\usepackage{subfigure}

\graphicspath{{figures/}}

\newcommand{\tracr}{\texttt{Tracr}\xspace}

\newcommand{\craft}{\texttt{Craft}\xspace}
\newcommand{\githublink}{\url{https://github.com/google-deepmind/tracr}\xspace}
\renewcommand{\paragraph}[1]{\noindent\textbf{#1}}

\title{Tracr: Compiled Transformers as a \\ Laboratory for Interpretability}

\makeatletter
\newcommand{\printfnsymbol}[1]{%
  \textsuperscript{\@fnsymbol{#1}}%
}
\makeatother

\author{%
    \setcounter{footnote}{1}%
    David Lindner\thanks{Correspondence to \href{mailto:dlindner@google.com}{dlindner@google.com}, \href{mailto:vmikulik@google.com}{vmikulik@google.com}.} \\
    Google DeepMind \And
    János Kramár \\
    Google DeepMind \And
    Sebastian Farquhar \\
    Google DeepMind \And
    Matthew Rahtz \\
    Google DeepMind \AND
    Thomas McGrath \\
    Google DeepMind \And
    Vladimir Mikulik\printfnsymbol{2} \\
    Google DeepMind
}

\newcommand{\R}{\mathbb{R}}
\newcommand{\softmax}{\mathrm{softmax}}

\newcommand{\Wqk}{{W_{QK}}}
\newcommand{\Wov}{{W_{OV}}}

\newcommand{\loss}{\mathcal{L}}

\usepackage{listings}
\lstset{
  belowcaptionskip=1\baselineskip,
  breaklines=true,
  numbers=left,
  mathescape=true,
}

\lstdefinelanguage{rasp}{
  basicstyle=\fontsize{10}{13}\selectfont\ttfamily,
  numberstyle=\scriptsize\ttfamily,
  keywordstyle=\bfseries\color{dmpurple500},
  keywordstyle=[2]\bfseries\color{dmblue500},
  commentstyle=\color{dmgray600},
  identifierstyle=\color{black},
  stringstyle=\color{gdmgreen600},
  alsoletter={@=<>!},
  morekeywords={def,aggregate,selector\_width,select,and,or,not,nor,if,else,return},
  keywords=[2]{tokens,True,False,indices},%
  morecomment=[l]{\#},
  morestring=[b]",
}

\lstMakeShortInline[language=rasp,keepspaces]@
\newcommand{\ptt}[1]{{\color{dmpurple400}\tt #1}}

\begin{document}

\maketitle

\begin{abstract}
    We show how to ``compile'' human-readable programs into standard decoder-only transformer models.
    Our compiler, \tracr, generates models with known structure.
    This structure can be used to design experiments.
    For example, we use it to study ``superposition'' in transformers that execute multi-step algorithms.
    Additionally, the known structure of \tracr-compiled models can serve as \emph{ground-truth} for evaluating interpretability methods.
    Commonly, because the ``programs'' learned by transformers are unknown it is unclear whether an interpretation succeeded.
    We demonstrate our approach by implementing and examining programs including computing token frequencies, sorting, and parenthesis checking.
    We provide an open-source implementation of \tracr at \githublink.
\end{abstract}

\section{Introduction}

\begin{wrapfigure}{r}{0.4\textwidth}
    \centering\vspace{-2em}
    \includegraphics[width=0.95\linewidth]{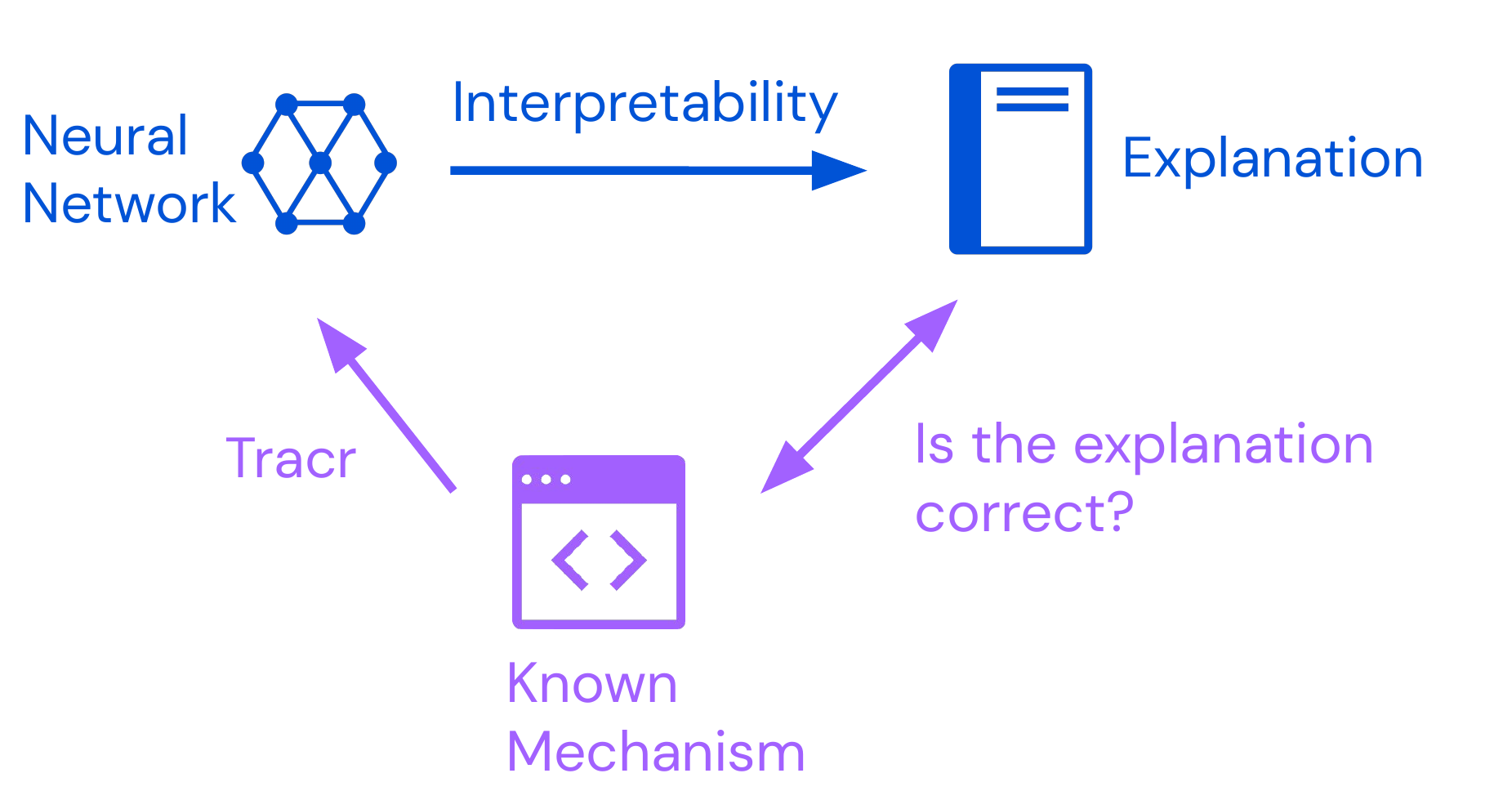}
    \caption{{\color{dmblue400} Interpretability tools} produce explanations for a given neural network. {\color{dmpurple400} \tracr creates models} that implement a known mechanism. We can then compare this mechanism to explanations an interpretability tool produces.}
    \label{fig:interpretability_with_tracr}
    \vspace{-1.5em}
\end{wrapfigure}

Large language models (LLMs) are powerful but their inner workings are poorly understood~\citep{danilevsky2020survey}.
The development of techniques for interpreting them is held back by a lack of \emph{ground-truth} explanations~\citep{yang2019evaluating}. Our ``compiler'', \tracr, converts human-readable programs written in RASP, a domain-specific language for transformer computations~\citep{weiss2021thinking}, into standard decoder-only transformers.

\tracr constructs models with known computational structure, which makes it easier to conduct interpretability experiments.
As an example, we study neural networks' ability to compress a large number of sparse features into fewer dimensions using superposition~\citep{elhage2022solu}. 
Compressing \tracr models using gradient descent allows us to study superposition in transformers implementing multi-step algorithms.

A second use of transformers that implement known computations is evaluating interpretability methods aiming to reveal facts about a model's computation. \tracr could allow future work to directly test methods including, for example, classifier probes~\citep{belinkov2022probing}, gradient-based attribution~\citep{nielsen2021robust}, and causal tracing~\citep{meng2022locating}.

Our main contributions are to:
(1) Introduce \tracr, which ``compiles'' RASP programs into transformer models (\Cref{sec:tracr});
(2) Showcase models produced by \tracr (\Cref{sec:manual_transformers});
(3) Provide a case-study where we examine superposition \tracr models compressed using gradient descent (\Cref{sec:compression}). We confirm key observations by \citet{anthropic2022toymodels} in a new setting: compressed models drop unnecessary features, and represent less important features in superposition.

In addition to aiding interpretability research, we think compiled models are a powerful didactic tool for developing a more concrete imagination for transformer mechanisms.

We discuss the applications and limitations of \tracr in \Cref{sec:conclusion} and \Cref{app:discussion}, and we provide an open-source implementation of the compiler at \githublink.

\begin{figure}[t]\centering
\begin{minipage}{0.43\linewidth}
\begin{lstlisting}[
  language=rasp,
  numbers=none,
  basicstyle=\scriptsize\ttfamily,
]
is_x = (tokens == "x")
prevs = select(indices, indices, <=)
frac_prevs = aggregate(prevs, is_x)
\end{lstlisting}
\end{minipage}
\begin{minipage}{0.56\linewidth}
\includegraphics[width=\linewidth]{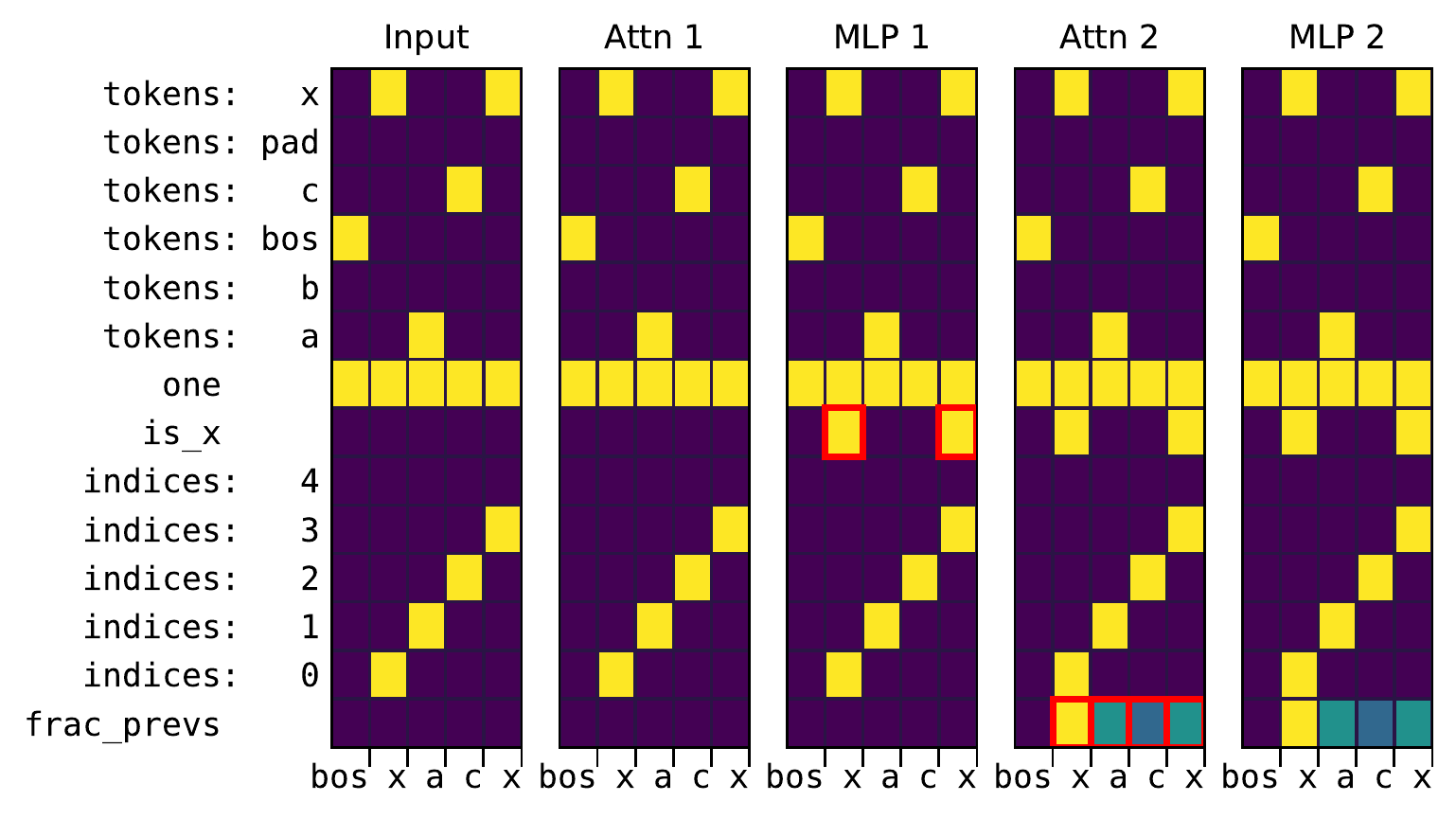}
\end{minipage}
\caption{An example RASP program (left) that computes the fraction of previous ``x'' tokens at each position of the input. \tracr compiles this program to a transformer model. (right) A visualisation of a forward pass through the compiled model, showing the contents the residual stream, one panel per layer.
The y-axis shows the residual stream dimensions, while the x-axis of each panel corresponds to the input sequence, ``\textless bos\textgreater xacx'' (x-axis). Changes to the residual are marked in red.
Attn 1 is a no-op, MLP 1 computes the indicator variable \texttt{is\_x}, Attn 2 implements the select-aggregate operation to compute \texttt{frac\_prevs}, and MLP 2 is a no-op again. \Cref{sec:manual_transformers} discusses this and other examples in more detail.
A detailed, step-by-step interpretation of the figure is provided in \Cref{app:figure_guide}.
}
\label{fig:frac_prevs_tracr}
\vspace{-1em}
\end{figure}

\section{Background}

This section provides an overview of key concepts our work builds on. In particular, we review the transformer model architecture and the RASP programming language.

\subsection{Transformer Architecture}

A transformer model consists of alternating \emph{multi-headed attention} (MHA) and \emph{multi-layer perceptron} (MLP) layers with residual connections.
Multi-headed attention \citep{vaswani2017attention} computes attention maps on sequences of length $N$. A single attention head $i$ first computes an attention pattern
\[
A^i = \softmax \left( (x W_Q^i) (x W_K^i)^T / \sqrt{d_k} \right) \in \R^{N \times N}
\]
for some input $x \in \R^{N \times d}$, where $W_Q^i, W_K^i \in \R^{d \times d_k}$ are learnable parameters. Usually, we call the entries of $(x W_K^i)$ \emph{keys}, and the entries of $(x W_Q^i)$ \emph{queries}. \emph{Multi-headed} attention combines $H$ attention heads heads by computing
\[
\mathrm{MHA}(x) = \mathrm{Concat} \left[ A^1 (x W_V^1), \dots, A^H (x W_V^H) \right] W_O
\]
where $W_V^i \in \R^{d \times d_v}$ and $W_O \in \R^{H d_v \times d}$ are another set of learnable parameters. We commonly call the entries of $(x W_V^i)$ \emph{values}.

The MLP layers in transformer models compute
$
\mathrm{MLP}(x) =  \sigma(x W_{1}) W_2
$
where $W_1 \in \R^{d \times h}$, $W_2 \in \R^{h \times d}$ are learnable weights, and $\sigma$ is a non-linear function; for simplicity, we use the Rectified Linear Unit (ReLU). In this paper, we focus on decoder-only transformers, which consists of alternating blocks of MHA and MLP.

When designing \tracr, we adopt the \emph{transformer circuits} perspective, introduced by \citet{elhage2021mathematical}. This view (1) focuses on the transformer being a residual stream architecture and (2) introduces an alternative parameterisation for attention operations. Taking this viewpoint, simplifies reasoning about the transformer architecture and will help us when assembling transformers manually.

\paragraph{The residual stream view.} Transformers have residual connections at each attention and MLP layer. \citet{elhage2021mathematical} consider the residual connections a core feature of the architecture and describe the model in terms of a \emph{residual stream} that each layer reads from and writes to in sequence. The residual stream acts as a type of memory that earlier layers can use to pass information to later layers.

\paragraph{Parameterising attention as $\Wqk$ and $\Wov$.} Following \citet{elhage2021mathematical}, we parameterise an attention head by two (low-rank) matrices $\Wqk^i = W_Q^i (W_K^i)^T / \sqrt{d_k} \in \R^{d \times d}$ and $\Wov^i = W_V^i W_O^i \in \R^{d \times d}$ where we split $W_O$ into different heads, such that $W_O = [ W_O^1, \dots W_O^H ]$, where each $W_O^i \in \R^{d_v \times d}$. We can then write MHA as
\[
A^i = \softmax \left( x \Wqk^i x^T \right) \hspace*{1.5cm} \mathrm{MHA}(x) = \sum_{i=1}^H A^i x \Wov^i
\]
Importantly, we can think of MHA as summing over the outputs of $H$ independent attention heads, each parameterised by low-rank matrices $\Wqk$ and $\Wov$. $\Wqk$ acts as a bilinear operator reading from the residual stream, and $\Wov$ is a linear operator both reading from and writing to the residual stream. The softmax is the only nonlinearity in an attention head.

\subsection{RASP}

The ``Restricted Access Sequence Processing Language'' (RASP) is a human-readable computational model for transformer models introduced by \citet{weiss2021thinking}. RASP is a sequence processing language with two types of variables, \emph{sequence operations} (s-ops) and \emph{selectors}, and two types of instructions, \emph{elementwise} and \emph{select-aggregate} transformations.

\paragraph{Sequence operations.} A sequence operation (s-op) represents sequences of values during evaluation. @tokens@ and @indices@ are built-in primitive s-ops that return a sequence of input tokens or their indices, respectively. For example:
@tokens("hello") $= [ \mathrm{h}, \mathrm{e}, \mathrm{l}, \mathrm{l}, \mathrm{o}]$@, and
@indices("hello") $= [ 0, 1, 2, 3, 4 ]$@. S-ops roughly correspond to the state of the residual stream in transformers.

\paragraph{Elementwise operations.} RASP allows arbitrary elementwise operations on s-ops. For example, we can compute
@(3*indices)("hello") $= [0, 3, 6, 9, 12]$@.
Elementwise operations roughly correspond to MLP layers in transformers.

\paragraph{Select-aggregate operations.} To move information between token positions, RASP provides \emph{select-aggregate} operations which roughly correspond to attention in transformers. A \emph{selector} has a graph dependency on two s-ops and evaluates on inputs of length $N$ to a binary matrix of size $N \times N$. To create a selector, the @select@ operation takes two s-ops and a boolean predicate $p(x, y)$. For example:
\begin{lstlisting}[language=rasp,numbers=none]
select(indices, [1, 0, 2], <)("abc") $=
\begin{bmatrix}
1 & 0 & 0 \\
0 & 0 & 0 \\
1 & 1 & 0
\end{bmatrix}.
$
\end{lstlisting}
Here, $p(x, y) = x < y$, where $x$ comes from @indices@, and $y$ comes from the constant s-op $[1, 0, 2]$.

The @aggregate@ operation takes as input a selector and an s-op, and produces an s-op that averages the value of the s-op weighted by the selection matrix. For example:
\begin{lstlisting}[language=rasp,numbers=none]
aggregate$\left( \begin{bmatrix}
1 & 0 & 0 \\
0 & 0 & 0 \\
1 & 1 & 0
\end{bmatrix}, ~[10, 20, 30] \right)
= [10, 0, 15].
$
\end{lstlisting}
A selector roughly corresponds to an attention pattern in a transformer. Together a select-aggregate operation roughly corresponds to an attention head in transformers.

\subsection{Mechanistic Interpretability and Superposition}\label{sec:background_superposition}

Mechanistic interpretability~\citep{cammarata2020thread, olah2022mechanistic} aims to produce mechanistic explanations of the inner workings of ML programs.
This includes attempts to reverse engineer how neural networks implement specific behaviours~\citep{cammarata2020thread}.\footnote{Our approach is complementary: we \emph{construct} neural networks to implement specific behaviours.}

In \Cref{sec:compression}, we study \emph{superposition}: the ability of a neural network to approximately represent many more features than the number of dimensions of the embedding space~\citep{elhage2022solu}.
Despite preliminary evidence that superposition occurs in neural networks, it remains poorly understood, in part because it has only been studied in small (2-layers or less) networks that implement very simple algorithms~\citep{anthropic2022toymodels,redwood2022polysemanticity}.
Understanding superposition in larger models could represent a major step forward for mechanistic interpretability~\citep{olah2022mechanistic}.

\section{\tracr: A Transformer Compiler for RASP}\label{sec:tracr}

In this section, we provide an overview of how \tracr translate RASP programs to transformer weights. For more details on the implementation, we refer to \Cref{app:compiler_details} and our open-source implementation at \githublink including the accompanying documentation.

\begin{wrapfigure}{r}{0.4\textwidth}
    \centering
    \includegraphics[width=0.95\linewidth]{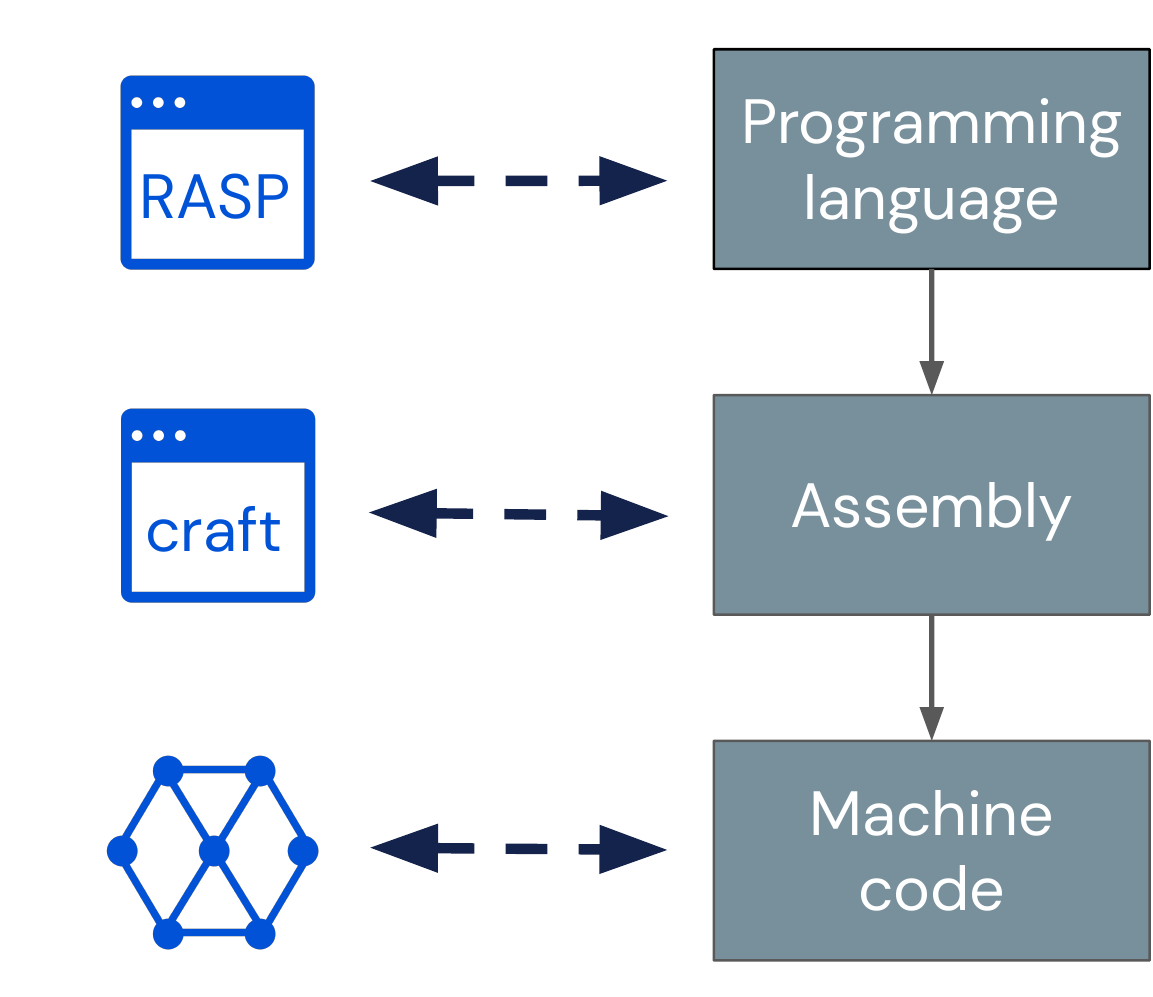}
    \caption{\tracr translates RASP to \craft and then to model weights, analogous to how programming languages are first translated to assembly then to machine code.}
    \label{fig:compiler_analogy}
\end{wrapfigure}

\tracr comes with an implementation of RASP embedded in Python. A RASP program is an expression graph which is incrementally constructed from atomic RASP operations.
We make a few technical modifications to allow translating RASP to model weights: we disallow boolean combinations of selectors, enforce annotated categorical or numerical embeddings for the residual stream, and enforce the use of a beginning-of-sequence token.
We discuss the motivations for each of these changes in \Cref{app:rasp_modifications}, where we also explain how any RASP program can be refactored to be compatible with these restrictions. In practice,  we can implement programs to solve all tasks described by \citet{weiss2021thinking}.

If RASP is the high-level language we compile, \craft is our ``assembly language'', offering slightly more abstraction than pure weight matrices (cf.~\Cref{fig:compiler_analogy}). \craft provides a transformer implementation using vector spaces with labelled basis dimensions and operations on them. This lets us define projections or other linear operations in terms of basis direction labels, which simplifies constructing model components that act on different vector spaces. As a bonus, models represented in \craft are independent of specific transformer implementations. Models compiled by \tracr can be translated into weights of any standard decoder-only transformer model (without layer norm).

\tracr translates RASP programs to transformer weights in six steps:
\begin{compactenum}
\item Construct a computational graph (\Cref{subfig:step2}).
\item Infer s-op input and output values (\Cref{subfig:step2}).
\item Independently translate s-ops into model blocks (\Cref{subfig:step3}).
\item Assign components to layers (\Cref{subfig:step5}).
\item Construct the model (\Cref{subfig:step5}).
\item Assemble weight matrices.
\end{compactenum}
Let us go through these step by step. \Cref{fig:compiler_overview} gives a schematic overview using an example program.

\begin{figure}
\centering
\subfigure[Steps 1 \& 2: Graph with inferred s-op value sets.]{
    \includegraphics[width=0.26\linewidth]{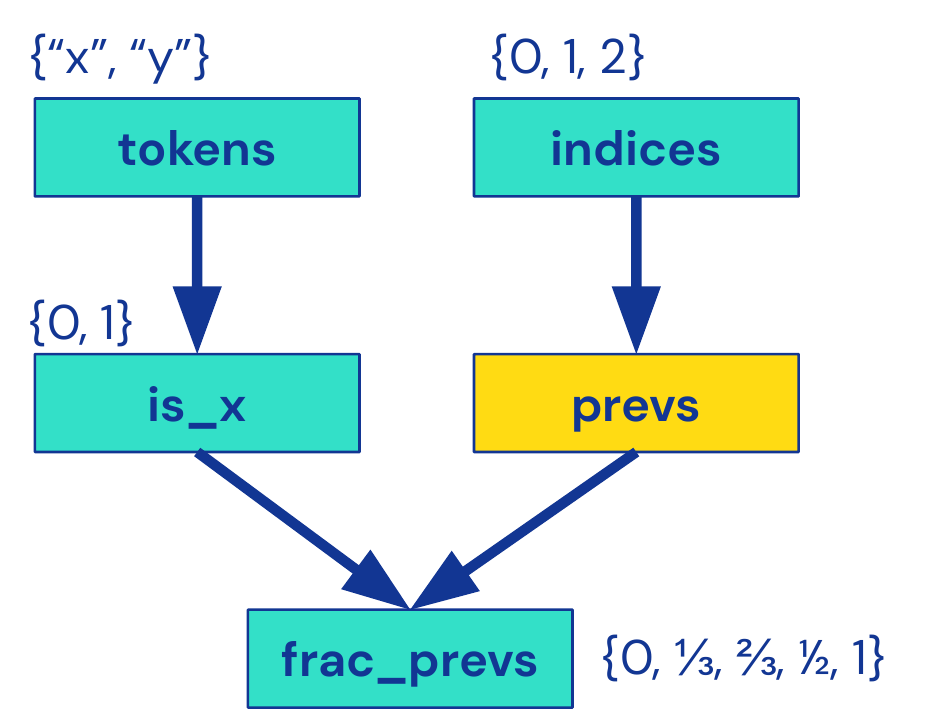}\label{subfig:step2}
}\hfill
\subfigure[Step 3: Nodes translated to MLPs and attention heads.]{
    \includegraphics[width=0.26\linewidth]{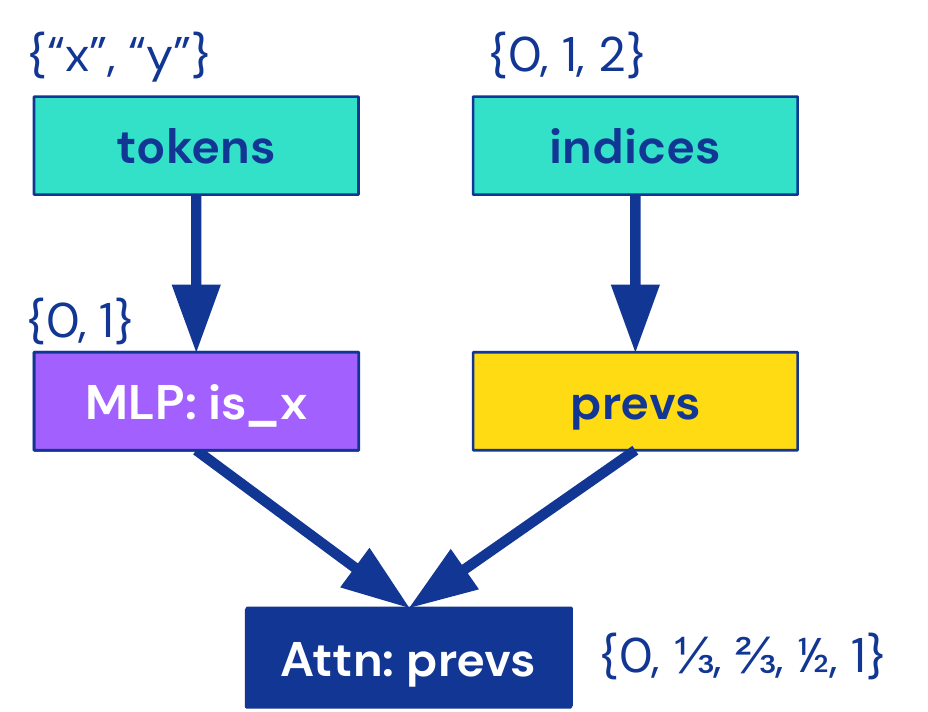}\label{subfig:step3}
}\hfill
\subfigure[Steps 4 \& 5: Nodes allocated to locations in a model.]{
    \includegraphics[trim={0 0.5cm 2cm 0},width=0.4\linewidth]{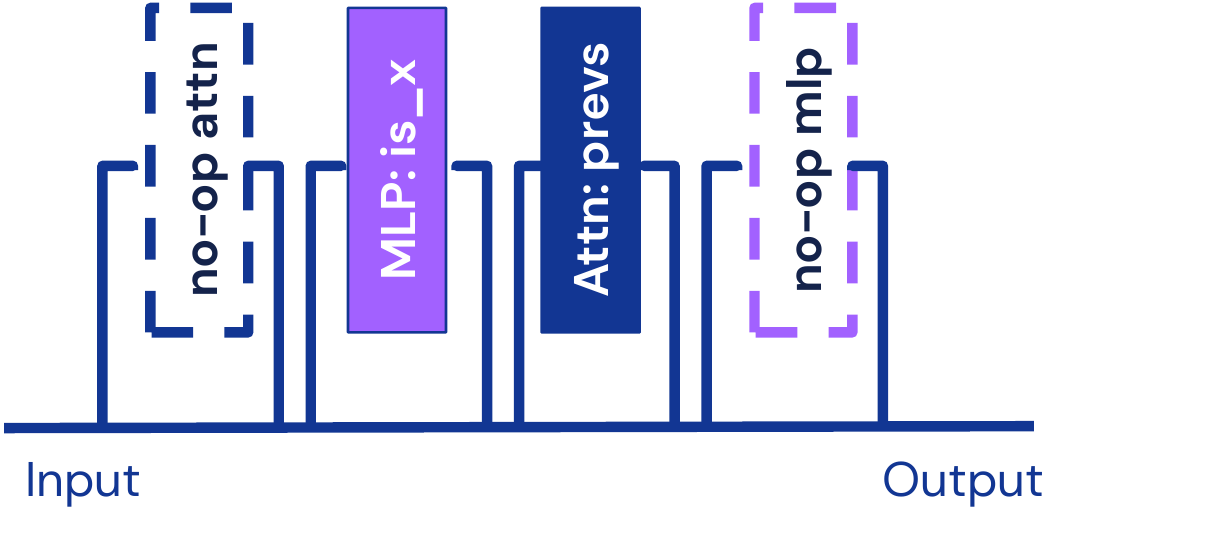}\label{subfig:step5}
}
\caption{
    Schematic overview of how \tracr compiles the \texttt{frac\_prevs} program from \Cref{fig:frac_prevs_tracr} with a input vocabulary $\{"\mathrm{x}", "\mathrm{y}"\}$ and context size $3$. \subref{subfig:step2} shows the computational graph with value annotations after step 2 of the compilation. \subref{subfig:step3} shows how \texttt{is\_x} and \texttt{frac\_prevs} are translated to model components independently in step 3. \subref{subfig:step5} shows the assembled model which has two no-op components because models blocks always need to have one attention and one MLP layer.
}
\label{fig:compiler_overview}
\vspace{-1.5em}
\end{figure}

\paragraph{1. Construct a computational graph (\Cref{subfig:step2}).}
First, we trace the whole program to create a directed graph representing the computation. The graph has source nodes representing @tokens@ and @indices@ and a sink node for the output s-op. Each operation in the RASP program becomes a node in the computational graph.

\paragraph{2. Infer s-op values (\Cref{subfig:step2}).}
For each s-op, we need to decide how to embed it in the residual stream. To use categorical encodings, we need to know which values an s-op can take. All nodes have a finite set of output values because computations are deterministic, and we have a finite input vocabulary and context size. Therefore, in the second step, we traverse the graph and annotate each node with its possible outputs. This annotation uses simple heuristics that ensure we find a superset of the values an s-op will take, though, sometimes, an output set can contain values that the s-op never takes in practice.

\paragraph{3. Independently translate s-ops (\Cref{subfig:step3}).}
Next, we consider each node in the computational graph independently and translate it into a model block. Elementwise operations become MLP blocks, and select-aggregate operations become attention blocks. We use a library of manually engineered MLP and attention blocks to approximate arbitrary functions for numerical and categorical inputs and outputs. MLPs with categorical inputs and outputs function as lookup tables. MLPs with numerical inputs and outputs use piecewise linear approximations. For attention layers, we translate a selector into the $\Wqk$ operator and the corresponding aggregate operation into the $\Wov$ operator. We only support attention with categorical inputs.  We also do a few basic simplifications of RASP programs at this stage. For example, we combine consecutive elementwise operations into a single s-op. For more details on the MLP and attention blocks, see \Cref{app:compiler_details}.

\paragraph{4. Assign components to layers (\Cref{subfig:step5}).}
To construct a transformer model, we need to allocate all model blocks in the computational graph to layers. Ideally, we want to find the smallest model to perform the desired computation. We can generally formulate this as a combinatorial optimization problem with several constraints: the transformer architecture has alternating attention and MLP layers, and all computations that depend on each other need to be in the correct order. For scope reasons, we solve this problem heuristically. First, we compute the longest path from the input to a given node. This path length is an upper bound for the layer number to which we can allocate the node. Then we apply additional heuristics to combine layers with blocks that we can compute in parallel. This approach returns a correct but sometimes suboptimal layer allocation.

\paragraph{5. Construct the model (\Cref{subfig:step5}).}
We construct the residual stream space as the direct sum of all model components' input and output spaces. In other words, we embed each s-op in its own orthogonal subspace, which is reserved for its sole use throughout the entire network. 
Now, we can traverse the computational graph in the order determined by the layer allocation and stack the components to obtain a full transformer represented in \craft.

\paragraph{6. Assemble weight matrices.}
Finally, we translate the \craft representation of the model into concrete model weights. First, we combine parallel MLP layers into a single layer and parallel attention heads into a single layer. In attention layers, we then factor the $\Wqk$ and $\Wov$ matrices into separate $W_q$, $W_k$, $W_o$, $W_v$ weight matrices. Finally, we adjust the shapes of all weights and connect them to our transformer architecture. We can then infer the model configuration (depth, layer width, residual stream size, etc.) to fit the elements we have created.

We use a standard decoder-only transformer implementation in Haiku~\citep{haiku2020github}, notably removing layer norms. Extending \tracr to support any other transformer implementation is straightforward by reimplementing only step 6.

We are now ready to compile models with \tracr and walk through a few example programs.

\section{Exploring Compiled Transformers}\label{sec:manual_transformers}

\begin{figure}\centering
\begin{minipage}{0.48\linewidth}
\begin{lstlisting}[language=rasp,numbers=none,basicstyle=\scriptsize\ttfamily]
smaller = select(tokens, tokens, <=)
target_pos = selector_width(smaller)
sel_sort = select(target_pos, indices, ==)
sort = aggregate(sel_sort, tokens)
\end{lstlisting}
\vfill
\vspace{2em}
\caption{
\looseness -1 RASP program that sorts a sequence of numbers without duplicates. Attn 1 and MLP 1 implement the \texttt{selector\_width} primitive (cf.~\Cref{app:compiler_details}) which the program uses to compute the target position for each token. Attn 2 moves the tokens to the desired position, and MLP 2 is a no-op.
}
\label{fig:sort_unique_tracr}
\end{minipage}
\hfill
\begin{minipage}{0.5\linewidth}
\includegraphics[trim={6.4cm 0 0 0},clip,width=\linewidth]{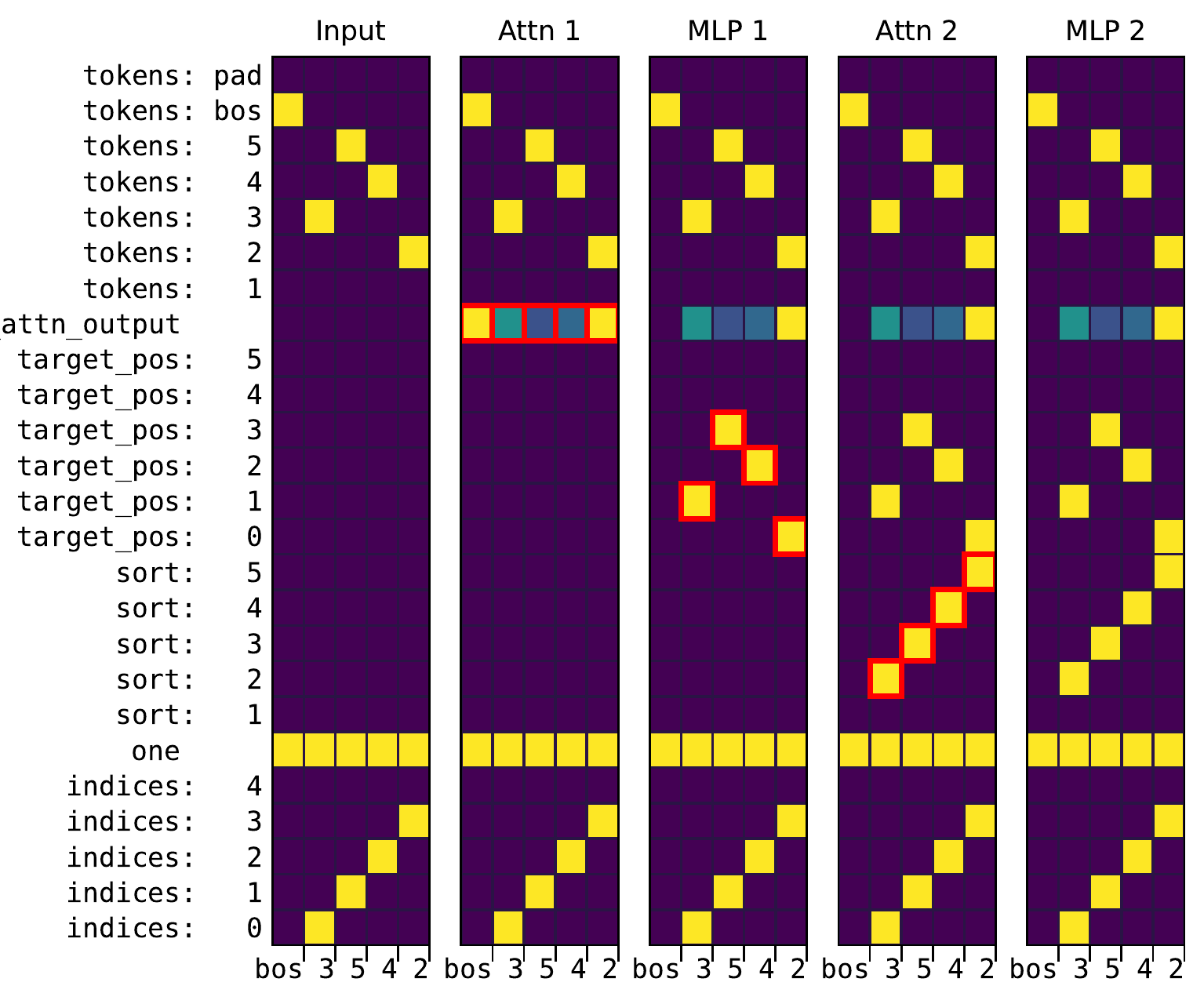}
\end{minipage}
\end{figure}

In this section, we walk through two example programs to illustrate how the compiled models work. While these models are not necessarily realistic, they represent configurations of weights that could, in principle, be learned. Examining these models can therefore be a powerful didactic tool for understanding how transformers perform complex computation, which we hope will expand our collective imagination for their inner workings.

We were able to compile RASP programs for all the tasks described in \citet{weiss2021thinking}, though we had to modify a few programs to only use features supported by \tracr. \Cref{app:more_programs} contains more examples.

\subsection{Example 1: Counting Tokens}

\Cref{fig:frac_prevs_tracr} shows our primary running example, the \texttt{frac\_prevs} program, that computes the fraction of previous ``x" tokens.

The compiled \texttt{frac\_prevs} model has a 14-dimensional residual stream, but it uses 12 out of these for the input embeddings. The remaining two dimensions contain the main numerical variables used in the computation: \texttt{is\_x} and \texttt{frac\_prevs} (the output variable).
The input embeddings have a few special dimensions. In particular, \texttt{tokens:bos} is the beginning of sequence token which we need to implement arbitrary attention patterns, and \texttt{one} is an input dimension that is always $1$, used as a constant, e.g., to add a bias in MLP layers.

The compiled model uses one MLP layer and one attention head. However, because our model architecture always starts with an attention layer, the compiled model has four layers, with the first and last layers being no-ops. The first MLP layer computes the indicator variable \texttt{is\_x} based on the input tokens. The following attention layer computes a causal attention pattern and uses it to compute the faction of previous ``x'' tokens.

\subsection{Example 2: Sorting}

As a second example, let us consider sorting a sequence of numbers. \Cref{fig:sort_unique_tracr} shows a \texttt{sort\_unique} program that sorts a sequence of unique tokens.

The program computes uses a selector to select smaller tokens for each input token, and then uses the \texttt{selector\_width} primitive in RASP to compute the target position for each token. \texttt{selector\_width} counts the number of elements in each row of a selector that are $1$, in this case the number of elements that are smaller than a given input token. \texttt{selector\_width} can be implemented in terms of other RASP operations~\citep{weiss2021thinking}. However, in \tracr we treat it as a primitive that compiles directly to an attention and MLP layer (here Attn 1 and MLP 1). See \Cref{app:compiler_details} for more details. The model then uses a second attention layer to move each token to its target position.

\begin{samepage}
\citet{weiss2021thinking} propose a sort program that can handle duplicates (cf.~their Figure 13). However, that implementation uses a composite selector
\begin{lstlisting}[language=rasp,numbers=none]
select(tokens, tokens, <) or (
    select(key, key, ==) and select(indices, indices, <))
\end{lstlisting}
to treat duplicates, which is not currently supported by \tracr. In \Cref{app:more_programs}, we provide an alternative implementation of \texttt{sort} that handles duplicates by adding a small multiple of @indices@ to the keys and then applying \texttt{sort\_unique}.
\end{samepage}

\subsection{More Examples}

\tracr can compile a wide range of RASP programs. In \Cref{app:more_programs}, we discuss several additional examples, leading up to a program to check balanced parentheses (\emph{Dyck-n}). Our open-source \tracr implementation (\githublink) contains a library of even more example programs to compile.

\section{Compressing Compiled Transformers}\label{sec:compression}

Superposition is an important phenomenon in large language models (see \Cref{sec:background_superposition}, \citet{anthropic2022toymodels}, and \citet{redwood2022polysemanticity}).
But to the best of our knowledge, it has not yet been studied in detail for models with more than two layers or in transformer models executing multi-step algorithms.
\tracr lets us examine these models, and we can force different levels of superposition by applying a gradient-descent-based compression algorithm.

In addition to helping us study superposition, compressed models could be more efficient and realistic. \tracr models can be sparse and inefficient because they reserve an orthogonal subspace of the residual stream for each s-op.

Here, we present two case studies of compressing compiled models using the \texttt{frac\_prevs} and the \texttt{sort\_unique} programs from \Cref{sec:manual_transformers}. These sketch how \tracr can be practically useful in advancing interpretability research, while also giving a glimpse of how \tracr could be extended to produce more realistic models.

\subsection{Gradient Descent Based Compression}

We use a single linear projection $W \in \R^{D \times d}$ to compress the disentangled residual stream with size $D$ to a smaller space with dimension $d < D$.
We modify the model to apply $W^T$ whenever it reads from and $W$ whenever it writes to the residual stream (see \Cref{fig:superposition_setup}). We freeze all other weights and train only $W$ using stochastic gradient descent (SGD).
Since vanilla \tracr models are sparse and have orthogonal features, this process can be viewed as learning the projection from a ``hypothetical disentangled model" to the ``observed model" described by \citet{anthropic2022toymodels}.

We want the compressed model to minimise loss under the constraint that it implements the same computation as the original model. We train $W$ to minimise $\mathbb{E}_x[\loss_\mathrm{out}(W, x) + \loss_\mathrm{layer}(W, x)]$, where
\begin{align*}
\loss_\mathrm{out} = \mathrm{loss}(f(x), \hat{f}_W(x)); ~~
\loss_\mathrm{layer} = \sum_{\text{layer }i} (h_i(x) - \hat{h}_{W, i}(x))^2
\end{align*}
Here, $f(x)$ is the output of the compiled model for input $x$, $\hat{f}_W(x)$ is the output of the compressed model, and $h_i(x)$ and $\hat{h}_{W, i}(x)$ are the output vectors at layer $i$ of the respective models.

\begin{figure}
  \centering
  \includegraphics[width=0.95\linewidth]{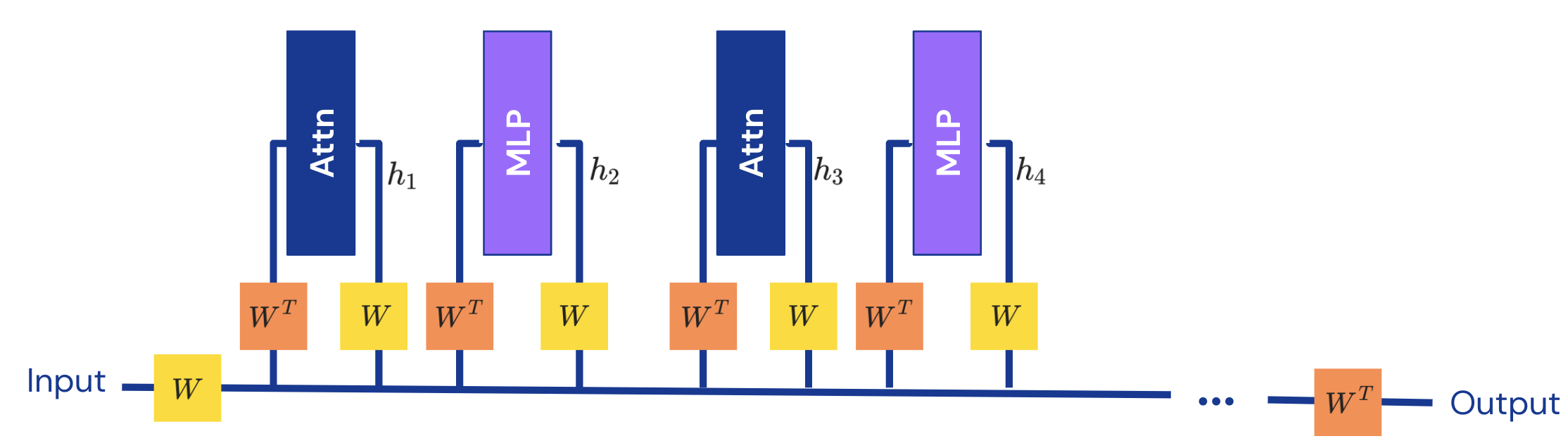}
  \caption{Training setup for compressing a compiled transformer model. At each layer, we use the same matrix $W \in \R^{D \times d}$ to embed the disentangled $D$-dimensional residual stream to $d \leq D$ dimensions. We freeze the layer weights and only train $W$ to compress the model.}
  \label{fig:superposition_setup}
\end{figure}

For categorical outputs, $\loss_\mathrm{out}$ is the softmax cross-entropy loss, whereas, for numerical outputs, it is the mean-squared error. $\loss_\mathrm{layer}$ is a regularization term that incentives the compressed model to match the per-layer outputs of the original model. To minimise this loss, the compressed model will have to approximate the computation of the original model but with a smaller residual stream. We give both loss terms equal weight, but we found other weighting factors give similar results in practice. 

We could set up this compression in other ways. For example, we could use a different projection at each layer, use different matrices for embedding and unembedding, or modify weights other than $W$ when compressing the model. These design choices come with a tradeoff between making the model more expressible and potentially more realistic and enforcing the ground truth computation. For simplicity, we use a shared $W$ for embedding/unembedding at every layer, and we already observe a rich structure in models compressed with this procedure.

\Cref{app:training_details} contains more details on the training setup, hyperparameters, and resources used.

\begin{figure}
  \centering
  \subfigure[Training loss]{
  \includegraphics[width=0.24\linewidth]{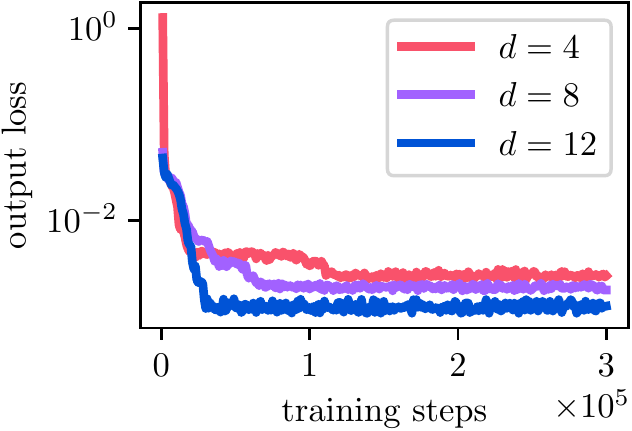}
  \label{subfig:compression_training_loss}
  }
  \subfigure[Output loss vs. $d$]{
  \includegraphics[width=0.24\linewidth]{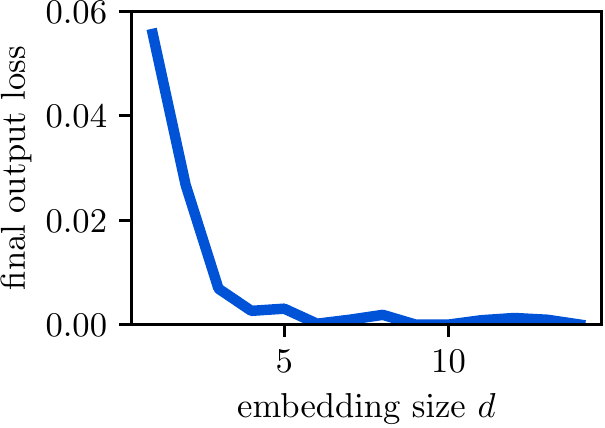}
  \label{subfig:compression_loss_embedding}
  }
  \subfigure[SGD Compression]{
    \includegraphics[height=2.8cm]{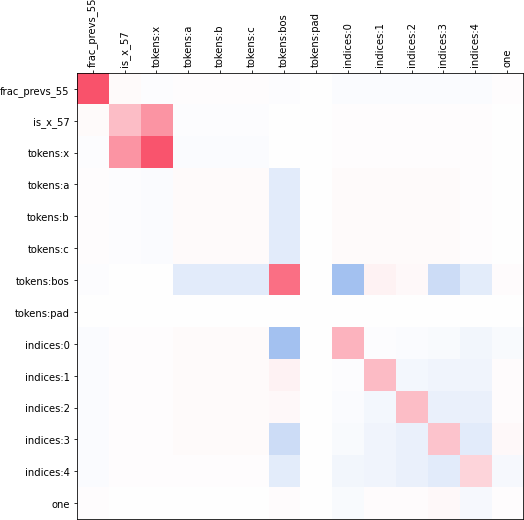}\label{subfig:sgd}
  }
  \subfigure[PCA]{
    \includegraphics[height=2.8cm]{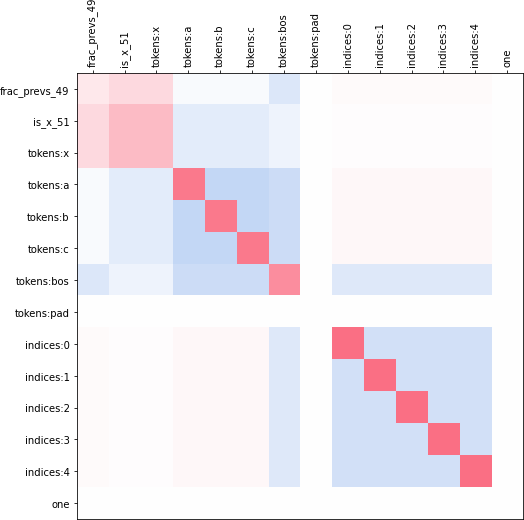}
    \includegraphics[height=2.4cm]{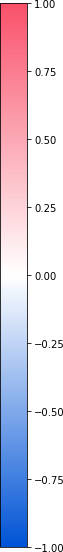}
    \label{subfig:pca}
  }
  \caption{Compressing the \texttt{frac\_prevs} model \Cref{fig:frac_prevs_tracr}. \subref{subfig:compression_training_loss} shows the loss during training for different embedding sizes $d$ and \subref{subfig:compression_loss_embedding} shows the final loss for different embedding sizes $d$. After about $d=6$  the compressed model solves the task essentially as well as the original compiled model which uses $D=14$ dimensions.
  \subref{subfig:sgd} shows $W^T W$ for the compression procedure described in \Cref{sec:compression} with $d=8$ where $W$ is the learned compression matrix.
  As a comparison, \subref{subfig:pca} shows the same plot for applying PCA and retaining only the first $8$ components. In contrast to PCA, our compression procedure produces a compression matrix $W$ that retains features necessary for the task (e.g., \texttt{is\_x} and \texttt{frac\_prevs}) and discards features that are unimportant (e.g., \texttt{tokens:a}).
  }
  \label{fig:compression_heatmaps}
  \label{fig:compression_loss}
  \vspace{-1em}
\end{figure}

\subsection{What Does the Compression Learn?}

\looseness -1 As our first case study, \Cref{fig:compression_loss} shows the example model from \Cref{fig:frac_prevs_tracr}, that computes the fraction of token ``x''. By learning an embedding matrix $W$, we can reduce the residual dimension from $D=14$ to $d=6$ without hurting performance (cf~\Cref{subfig:compression_loss_embedding}). Once we reduce $d$ further, the model's performance starts to suffer.

To understand the compression better, we can study how $W$ embeds the original $D$ features in $d < D$ dimensions. We can only do this because we started with a compiled model with known features. \Cref{fig:compression_heatmaps} shows $W^T W$ for compressing the model to $d=8$. We can compare this to using principle component analysis (PCA) to compress the model. To interpret the results, we need to use our knowledge of the algorithm the model implements. The input \texttt{tokens:x} and the variables \texttt{is\_x} and \texttt{frac\_prevs} are crucial for computing the fraction of tokens that is ``x'', and we find that these variables mostly get separate dimensions in the compressed residual stream. The other input tokens stored in \texttt{tokens:a}, \texttt{tokens:b}, \texttt{tokens:c} are not necessary for solving the task, and so they are discarded in the compressed model. Other variables, such as the \texttt{indices} embeddings, are stored in non-orthogonal dimensions in the compressed space. This is consistent with existing findings on superposition as the \texttt{indices} embeddings are sparse and do not occur together~\citep{anthropic2022toymodels}.

However, our results go beyond previous work on superposition. \tracr models often have multiple variables that depend on each other and encode shared information. For example, in \texttt{frac\_prevs}, the \texttt{is\_x} variable is an indicator that essentially contains the same information as the input dimension \texttt{tokens:x}.%
\footnote{They are not exactly the same because \texttt{is\_x} is only populated in a later layer.}
In \Cref{fig:compression_heatmaps}, we see that the embeddings of \texttt{is\_x} and \texttt{tokens:x} share part of the embedding space. Intuitively, this occurs because the variables encode similar information.

Future experiments could aim to further clarify the effect of shared information between variables on superposition. \tracr provides, for the first time, a setting to systematically study superposition in transformer models that implement nontrivial algorithms.

\subsection{Do the Compressed Models Still Implement the Same Computation?}

\begin{figure}
  \centering\hspace{-3em}
  \begin{minipage}{0.36\linewidth}
    \includegraphics[width=\linewidth]{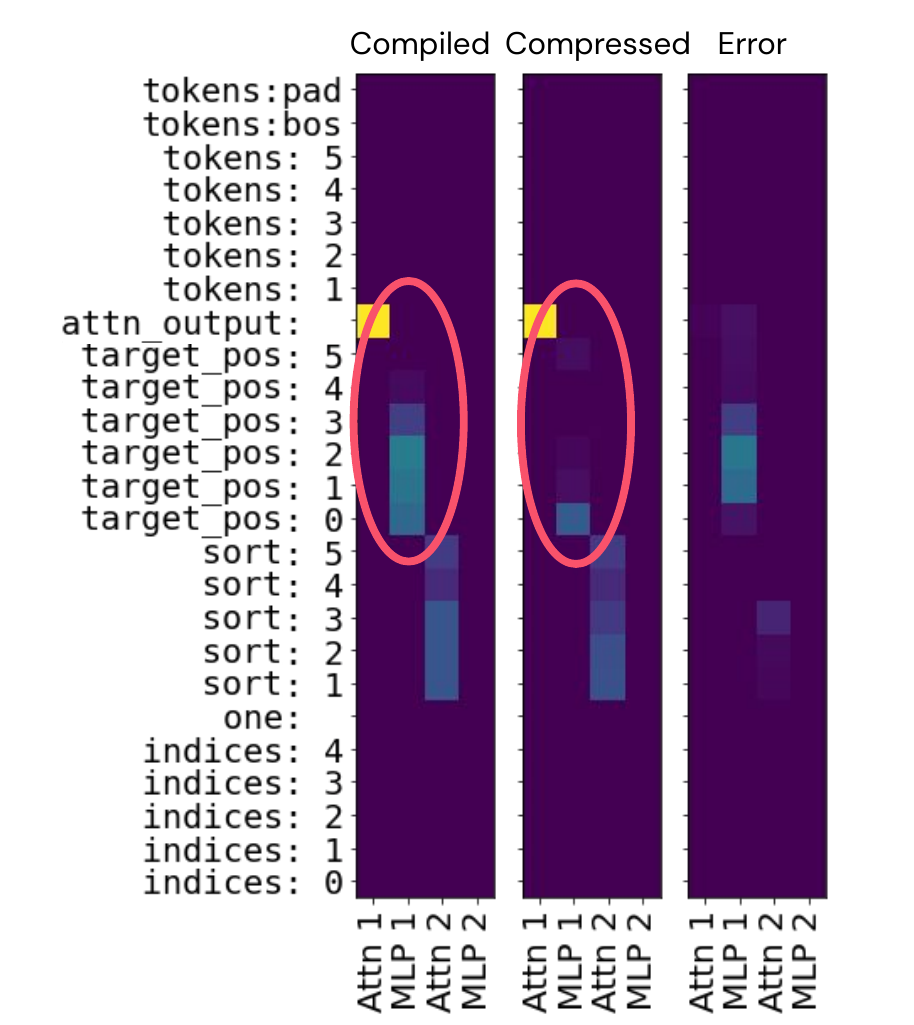}
  \end{minipage}
  \begin{minipage}{0.24\linewidth}
    \vspace{1em}
    \includegraphics[width=\linewidth]{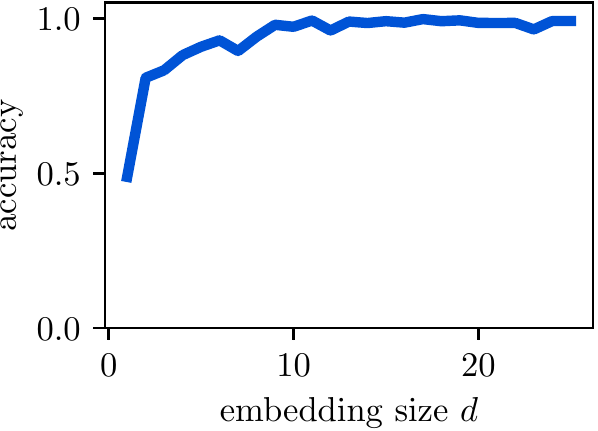} \\
    \vspace{0.2em}
    \includegraphics[width=\linewidth]{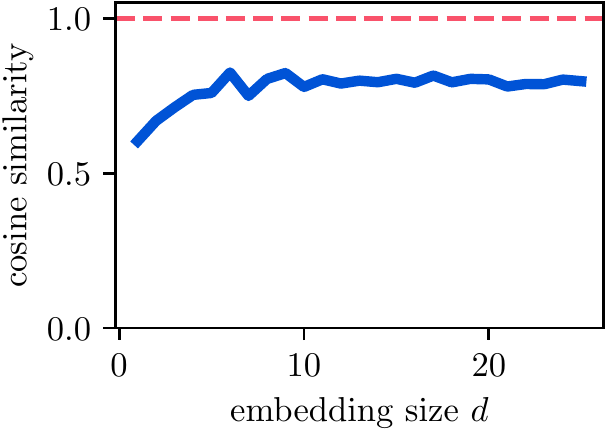}
  \end{minipage}\hfill
  \begin{minipage}{0.39\linewidth}
    \vspace{-2em}
    \caption{
    We compress \texttt{sort\_unique} (\Cref{fig:sort_unique_tracr}). The plots on the right show that the compressed model achieves nearly perfect accuracy, but the layer outputs of the compressed model are different from the original compiled model. The left plot shows the average layer outputs of the compiled model, the compressed model, and the squared difference. The compressed model seems to learn to use a different (numerical) encoding for the \texttt{target\_pos} variable, which causes the discrepancy.
    }
  \vspace{-3em}
  \label{fig:same_computation}
  \end{minipage}
\end{figure}

Even if the compressed models successfully achieve a low loss, we need to check if they implement the same computation as the compiled models, or else we would no longer know the ground truth mechanisms the models implement. To this end, we evaluate the average cosine similarity between the output at each layer of the two models. Values far from $1$ suggest the compressed model is structured differently from the base model.

We find that for some models the cosine similarity stays substantially below $1$ even as the compressed model gets close to $100\%$ in accuracy. For example, \Cref{fig:same_computation} shows results from compressing the \texttt{sort\_unique} model. Here, the compressed model achieves almost perfect accuracy on the task, but the average cosine similarity of the outputs at individual layers stays around $0.8$, far shy of $1$.

By inspecting the models' outputs at each layer, we can attribute the error to the \texttt{target\_pos} variable. In the compiled model, \texttt{target\_pos} is encoded as a one-hot vector. However, the compiled model only uses a single dimension. This suggests that the compressed model moves the tokens to the target position with a numerical encoding of the target position rather than a categorical encoding.

This difference in encodings shows that even with a fairly restrictive compression setup, compressed models may not stay faithful to the original RASP programs. This is both a setback for adding compression to the compiler---the compiler's annotations no longer serve as the exact ground truth---but also an opportunity. The ways neural networks solve algorithmic tasks regularly surprise researchers \citep{nanda2022grokking}. Studying such discrepancies could be a way to learn more about the ways NNs naturally represent certain computations without reverse-engineering entire models.

\section{Related Work}\label{sec:related_work}

There are many approaches to interpretability in machine learning~\citep{carvalho2019machine}, and in language models specifically~\citep{danilevsky2020survey, belinkov2019analysis, rogers2020primer}. In this paper, we focus on interpretability in the sense of giving a faithful \citep{jacovi2020towards} and detailed account of the mechanisms learned by a model, sometimes called \emph{mechanistic interpretability}~\citep{olah2022mechanistic} or \emph{transparency}~\citep{raukur2022toward}. 

Mechanistic interpretability has been used to reverse engineer circuits in state-of-the-art vision models~\citep{cammarata2020thread}, small transformer models trained on toy tasks~\citep{olsson2022context,nanda2022grokking}, and medium-sized language models~\citep{wang2022interpretability}. Reverse-engineered circuits can be used as more realistic alternative to compiled models. However, they are labor-intensive to identify, and our knowledge of them can be incomplete or inaccurate even when they are analysed carefully. For example, \citet{chan2022causal} show that the ``induction head'' hypothesis by \citet{olsson2022context} needs to be modified to adequately explain in-context learning performance even in small attention-only transformers.

While \tracr is based on RASP \citep{weiss2021thinking}, there are potential alternatives for constructing transformer models. \citet{wei2022statistically} and \citet{akyurek2022learning} study more general computational models for transformers. Based on this line of work, \citet{giannou2023looped} propose a Turing-complete model for constructing transformers, whereas RASP might have limited expressibility \citep{weiss2021thinking,merrill2022saturated}. However, the work by \citet{giannou2023looped} is purely theoretical, and the practical cost-benefit trade-off between their approach and our RASP-based approach is unclear.

Evaluation is a perennial topic of debate in interpretability, and there is little consensus on the best approach~\citep{lipton2018mythos,yang2019evaluating,mohseni2021multidisciplinary}. We hope that compiled models contribute a new perspective to this discussion and can complement other evaluation methods.

Our approach is closest to prior work trying to create a ground truth for evaluating interpretability, via careful manipulation of the training mechanism and dataset. \citet{yang2019benchmarking} and \citet{adebayo2020debugging} introduce label correlations to the background of images, and \citet{zhou2022feature} use label reassignments to achieve a similar goal. However, these approaches focus on convolutional image classification models, and they can only modify part of a model to have a ground truth interpretation. \tracr, on the other hand, creates transformer models that implement fully human-readable code.

Since releasing an early version of our work, \citet{conmy2023towards} successfully used \tracr to evaluate a method for automatically detecting circuits in transformer models, and \citet{friedman2023learning} built on \tracr and studied \emph{learning} transformer programs instead of manually writing them.

\section{Discussion \& Conclusion}\label{sec:conclusion}

We proposed to compile human-readable programs to neural network weights as a testbed for developing and evaluating interpretability tools. To this end, we introduced \tracr which compiles human-readable code to the weights of a transformer model.

\paragraph{Applications.} 
Compiled transformer models can be broadly useful for accelerating interpretability research. We highlight four usecases that could be particularly useful. First, we can use \tracr to create test cases and ultimately benchmarks for interpretbility tools. This can help to confirm methods work as expected and surface potential failure modes.
Second, we can measure our understanding of a model by manually replacing components of it with compiled components (similar to \citet{nanda2022grokking}). Over time, the research community could build a library of programs that represent our understanding of what neural networks learn.
Third, we can use compiled models to isolate and study phenomena that occur in real neural networks. Our study of superposition in \Cref{sec:compression} demonstrates the benefits of studying an isolated phenomenon in a model we otherwise fully understand.
Finally, compiled models can help us understand how transformers can implement certain algorithms and improve our ability to form concrete intuitions and hypotheses about models we want to interpret. \Cref{app:applications} discusses these applications in more detail.

\paragraph{Limitations.} RASP and \tracr have important limitations in terms of expressivity, efficiency and realism compared to real transformer models. While many limitations can be overcome in future versions, some are fundamental to using compiled models. Clearly, we will likely never compile fully featured language models in \tracr. Therefore, we should interpret experiments conducted on compiled models carefully, and treat evaluations based on them as a minimum bar rather than a full validation of a technique. \Cref{app:limitations} discusses these limitations in detail. 

Despite these limitations, we think \tracr provides a promising new approach to studying transformers and to evaluating interpretability tools. The current approach to doing interpretability research is similar to trying to invent a microscope lens without ever being able to point it at familiar, well-understood shapes. \tracr enables researchers to point their interpretability methods at models they fully understand to calibrate, evaluate, and improve the methods.

\section*{Acknowledgements}

We thank Avraham Ruderman, Jackie Kay, Michela Paganini, Tom Lieberum, and Geoffrey Irving for valuable discussions, Victoria Krakovna and Marlene Staib for collaborating on early experiments with compiling RASP, and Chris Olah and Tristan Hume for feedback on an early draft of this paper. We thank the LessWrong user ``Gurkenglas'' for pointing out a mistake in an earlier draft of the way to implement selectors combined with @and@ described in \Cref{app:theory}.

\section*{Author Contributions}

VM proposed the initial idea for \tracr and wrote our RASP implementation. DL, VM, JK and MR designed and developed \tracr. DL designed, implemented, and ran the compression experiments in \Cref{sec:compression}. MR wrote documentation and led the open-sourcing process. JK derived the theoretical results in \Cref{app:theory}. TM and VM advised on research direction. DL, SF, and VM wrote the manuscript. DL led the project.

\bibliographystyle{abbrvnat}
\bibliography{references}

\begin{thebibliography}{38}
\providecommand{\natexlab}[1]{#1}
\providecommand{\url}[1]{\texttt{#1}}
\expandafter\ifx\csname urlstyle\endcsname\relax
  \providecommand{\doi}[1]{doi: #1}\else
  \providecommand{\doi}{doi: \begingroup \urlstyle{rm}\Url}\fi

\bibitem[Adebayo et~al.(2020)Adebayo, Muelly, Liccardi, and
  Kim]{adebayo2020debugging}
J.~Adebayo, M.~Muelly, I.~Liccardi, and B.~Kim.
\newblock Debugging tests for model explanations.
\newblock In \emph{Advances in Neural Information Processing Systems}, 2020.

\bibitem[Aharon et~al.(2006)Aharon, Elad, and Bruckstein]{aharon2006k}
M.~Aharon, M.~Elad, and A.~Bruckstein.
\newblock {K-SVD}: An algorithm for designing overcomplete dictionaries for
  sparse representation.
\newblock \emph{IEEE Transactions on signal processing}, 54\penalty0
  (11):\penalty0 4311--4322, 2006.

\bibitem[Aky{\"u}rek et~al.(2023)Aky{\"u}rek, Schuurmans, Andreas, Ma, and
  Zhou]{akyurek2022learning}
E.~Aky{\"u}rek, D.~Schuurmans, J.~Andreas, T.~Ma, and D.~Zhou.
\newblock What learning algorithm is in-context learning? {I}nvestigations with
  linear models.
\newblock In \emph{International Conference on Learning Representations
  (ICLR)}, 2023.

\bibitem[Bau et~al.(2017)Bau, Zhou, Khosla, Oliva, and
  Torralba]{bau2017network}
D.~Bau, B.~Zhou, A.~Khosla, A.~Oliva, and A.~Torralba.
\newblock Network dissection: Quantifying interpretability of deep visual
  representations.
\newblock In \emph{IEEE Conference on Computer Vision and Pattern Recognition
  (CVPR)}, 2017.

\bibitem[Belinkov(2022)]{belinkov2022probing}
Y.~Belinkov.
\newblock Probing classifiers: Promises, shortcomings, and advances.
\newblock \emph{Computational Linguistics}, 48\penalty0 (1):\penalty0 207--219,
  2022.

\bibitem[Belinkov and Glass(2019)]{belinkov2019analysis}
Y.~Belinkov and J.~Glass.
\newblock Analysis methods in neural language processing: A survey.
\newblock \emph{Transactions of the Association for Computational Linguistics},
  7:\penalty0 49--72, 2019.

\bibitem[Cammarata et~al.(2020)Cammarata, Carter, Goh, Olah, Petrov, Schubert,
  Voss, Egan, and Lim]{cammarata2020thread}
N.~Cammarata, S.~Carter, G.~Goh, C.~Olah, M.~Petrov, L.~Schubert, C.~Voss,
  B.~Egan, and S.~K. Lim.
\newblock Thread: Circuits.
\newblock \emph{Distill}, 2020.
\newblock URL \url{https://distill.pub/2020/circuits}.

\bibitem[Carvalho et~al.(2019)Carvalho, Pereira, and
  Cardoso]{carvalho2019machine}
D.~V. Carvalho, E.~M. Pereira, and J.~S. Cardoso.
\newblock Machine learning interpretability: A survey on methods and metrics.
\newblock \emph{Electronics}, 8\penalty0 (8):\penalty0 832, 2019.

\bibitem[Chan et~al.(2022)Chan, Garriga-Alonso, Goldwosky-Dill, Greenblatt,
  Nitishinskaya, Radhakrishnan, Shlegeris, and Thomas]{chan2022causal}
L.~Chan, A.~Garriga-Alonso, N.~Goldwosky-Dill, R.~Greenblatt, J.~Nitishinskaya,
  A.~Radhakrishnan, B.~Shlegeris, and N.~Thomas.
\newblock Causal scrubbing, a method for rigorously testing interpretability
  hypotheses.
\newblock \emph{AI Alignment Forum}, 2022.
\newblock
  \url{https://www.alignmentforum.org/posts/JvZhhzycHu2Yd57RN/causal-scrubbing-a-method-for-rigorously-testing}.

\bibitem[Conmy et~al.(2023)Conmy, Mavor-Parker, Lynch, Heimersheim, and
  Garriga-Alonso]{conmy2023towards}
A.~Conmy, A.~N. Mavor-Parker, A.~Lynch, S.~Heimersheim, and A.~Garriga-Alonso.
\newblock Towards automated circuit discovery for mechanistic interpretability.
\newblock In \emph{Advances in Neural Information Processing Systems}, 2023.

\bibitem[Danilevsky et~al.(2020)Danilevsky, Qian, Aharonov, Katsis, Kawas, and
  Sen]{danilevsky2020survey}
M.~Danilevsky, K.~Qian, R.~Aharonov, Y.~Katsis, B.~Kawas, and P.~Sen.
\newblock A survey of the state of explainable {AI} for natural language
  processing.
\newblock \emph{AACL-IJCNLP 2020}, 2020.

\bibitem[Donoho(2006)]{donoho2006compressed}
D.~L. Donoho.
\newblock Compressed sensing.
\newblock \emph{IEEE Transactions on information theory}, 52\penalty0
  (4):\penalty0 1289--1306, 2006.

\bibitem[Elhage et~al.(2021)Elhage, Nanda, Olsson, Henighan, Joseph, Mann,
  Askell, Bai, Chen, Conerly, DasSarma, Drain, Ganguli, Hatfield-Dodds,
  Hernandez, Jones, Kernion, Lovitt, Ndousse, Amodei, Brown, Clark, Kaplan,
  McCandlish, and Olah]{elhage2021mathematical}
N.~Elhage, N.~Nanda, C.~Olsson, T.~Henighan, N.~Joseph, B.~Mann, A.~Askell,
  Y.~Bai, A.~Chen, T.~Conerly, N.~DasSarma, D.~Drain, D.~Ganguli,
  Z.~Hatfield-Dodds, D.~Hernandez, A.~Jones, J.~Kernion, L.~Lovitt, K.~Ndousse,
  D.~Amodei, T.~Brown, J.~Clark, J.~Kaplan, S.~McCandlish, and C.~Olah.
\newblock A mathematical framework for transformer circuits.
\newblock \emph{Transformer Circuits Thread}, 2021.
\newblock URL \url{https://transformer-circuits.pub/2021/framework/index.html}.

\bibitem[Elhage et~al.(2022{\natexlab{a}})Elhage, Hume, Olsson, Nanda,
  Henighan, Johnston, ElShowk, Joseph, DasSarma, Mann, Hernandez, Askell,
  Ndousse, Jones, Drain, Chen, Bai, Ganguli, Lovitt, Hatfield-Dodds, Kernion,
  Conerly, Kravec, Fort, Kadavath, Jacobson, Tran-Johnson, Kaplan, Clark,
  Brown, McCandlish, Amodei, and Olah]{elhage2022solu}
N.~Elhage, T.~Hume, C.~Olsson, N.~Nanda, T.~Henighan, S.~Johnston, S.~ElShowk,
  N.~Joseph, N.~DasSarma, B.~Mann, D.~Hernandez, A.~Askell, K.~Ndousse,
  A.~Jones, D.~Drain, A.~Chen, Y.~Bai, D.~Ganguli, L.~Lovitt,
  Z.~Hatfield-Dodds, J.~Kernion, T.~Conerly, S.~Kravec, S.~Fort, S.~Kadavath,
  J.~Jacobson, E.~Tran-Johnson, J.~Kaplan, J.~Clark, T.~Brown, S.~McCandlish,
  D.~Amodei, and C.~Olah.
\newblock Softmax linear units.
\newblock \emph{Transformer Circuits Thread}, 2022{\natexlab{a}}.
\newblock URL \url{https://transformer-circuits.pub/2022/solu/index.html}.

\bibitem[Elhage et~al.(2022{\natexlab{b}})Elhage, Hume, Olsson, Schiefer,
  Henighan, Kravec, Hatfield-Dodds, Lasenby, Drain, Chen, Grosse, McCandlish,
  Kaplan, Amodei, Wattenberg, and Olah]{anthropic2022toymodels}
N.~Elhage, T.~Hume, C.~Olsson, N.~Schiefer, T.~Henighan, S.~Kravec,
  Z.~Hatfield-Dodds, R.~Lasenby, D.~Drain, C.~Chen, R.~Grosse, S.~McCandlish,
  J.~Kaplan, D.~Amodei, M.~Wattenberg, and C.~Olah.
\newblock Toy models of superposition.
\newblock \emph{Transformer Circuits Thread}, 2022{\natexlab{b}}.
\newblock URL \url{https://transformer-circuits.pub/2022/toy_model/index.html}.

\bibitem[Friedman et~al.(2023)Friedman, Wettig, and Chen]{friedman2023learning}
D.~Friedman, A.~Wettig, and D.~Chen.
\newblock Learning transformer programs.
\newblock In \emph{Advances in Neural Information Processing Systems}, 2023.

\bibitem[Giannou et~al.(2023)Giannou, Rajput, Sohn, Lee, Lee, and
  Papailiopoulos]{giannou2023looped}
A.~Giannou, S.~Rajput, J.-y. Sohn, K.~Lee, J.~D. Lee, and D.~Papailiopoulos.
\newblock Looped transformers as programmable computers.
\newblock In \emph{International Conference on Machine Learning (ICML)}, 2023.

\bibitem[Hennigan et~al.(2020)Hennigan, Cai, Norman, and
  Babuschkin]{haiku2020github}
T.~Hennigan, T.~Cai, T.~Norman, and I.~Babuschkin.
\newblock {H}aiku: {S}onnet for {JAX}, 2020.
\newblock URL \url{http://github.com/deepmind/dm-haiku}.

\bibitem[Jacovi and Goldberg(2020)]{jacovi2020towards}
A.~Jacovi and Y.~Goldberg.
\newblock Towards faithfully interpretable {NLP} systems: How should we define
  and evaluate faithfulness?
\newblock In \emph{Proceedings of the 58th Annual Meeting of the Association
  for Computational Linguistics}, 2020.

\bibitem[Leavitt and Morcos(2020)]{leavitt2020towards}
M.~L. Leavitt and A.~Morcos.
\newblock Towards falsifiable interpretability research.
\newblock In \emph{NeurIPS Workshop: ML Retrospectives, Surveys \&
  Meta-Analyses (ML-RSA)}, 2020.

\bibitem[Lipton(2018)]{lipton2018mythos}
Z.~C. Lipton.
\newblock The mythos of model interpretability: In machine learning, the
  concept of interpretability is both important and slippery.
\newblock \emph{Queue}, 16\penalty0 (3):\penalty0 31--57, 2018.

\bibitem[Meng et~al.(2022)Meng, Bau, Andonian, and Belinkov]{meng2022locating}
K.~Meng, D.~Bau, A.~J. Andonian, and Y.~Belinkov.
\newblock Locating and editing factual associations in {GPT}.
\newblock In \emph{Advances in Neural Information Processing Systems}, 2022.

\bibitem[Merrill et~al.(2022)Merrill, Sabharwal, and
  Smith]{merrill2022saturated}
W.~Merrill, A.~Sabharwal, and N.~A. Smith.
\newblock Saturated transformers are constant-depth threshold circuits.
\newblock \emph{Transactions of the Association for Computational Linguistics},
  10:\penalty0 843--856, 2022.

\bibitem[Mohseni et~al.(2021)Mohseni, Zarei, and
  Ragan]{mohseni2021multidisciplinary}
S.~Mohseni, N.~Zarei, and E.~D. Ragan.
\newblock A multidisciplinary survey and framework for design and evaluation of
  explainable {AI} systems.
\newblock \emph{ACM Transactions on Interactive Intelligent Systems},
  11\penalty0 (3-4):\penalty0 1--45, 2021.

\bibitem[Nanda et~al.(2023)Nanda, Chan, Liberum, Smith, and
  Steinhardt]{nanda2022grokking}
N.~Nanda, L.~Chan, T.~Liberum, J.~Smith, and J.~Steinhardt.
\newblock Progress measures for grokking via mechanistic interpretability.
\newblock In \emph{International Conference on Learning Representations
  (ICLR)}, 2023.

\bibitem[Nielsen et~al.(2022)Nielsen, Dera, Rasool, Ramachandran, and
  Bouaynaya]{nielsen2021robust}
I.~E. Nielsen, D.~Dera, G.~Rasool, R.~P. Ramachandran, and N.~C. Bouaynaya.
\newblock Robust explainability: A tutorial on gradient-based attribution
  methods for deep neural networks.
\newblock \emph{IEEE Signal Processing Magazine}, 39\penalty0 (4):\penalty0
  73--84, 2022.

\bibitem[Olah(2022)]{olah2022mechanistic}
C.~Olah.
\newblock Mechanistic interpretability, variables, and the importance of
  interpretable bases.
\newblock 2022.

\bibitem[Olsson et~al.(2022)Olsson, Elhage, Nanda, Joseph, DasSarma, Henighan,
  Mann, Askell, Bai, Chen, Conerly, Drain, Ganguli, Hatfield-Dodds, Hernandez,
  Johnston, Jones, Kernion, Lovitt, Ndousse, Amodei, Brown, Clark, Kaplan,
  McCandlish, and Olah]{olsson2022context}
C.~Olsson, N.~Elhage, N.~Nanda, N.~Joseph, N.~DasSarma, T.~Henighan, B.~Mann,
  A.~Askell, Y.~Bai, A.~Chen, T.~Conerly, D.~Drain, D.~Ganguli,
  Z.~Hatfield-Dodds, D.~Hernandez, S.~Johnston, A.~Jones, J.~Kernion,
  L.~Lovitt, K.~Ndousse, D.~Amodei, T.~Brown, J.~Clark, J.~Kaplan,
  S.~McCandlish, and C.~Olah.
\newblock In-context learning and induction heads.
\newblock \emph{Transformer Circuits Thread}, 2022.
\newblock URL
  \url{https://transformer-circuits.pub/2022/in-context-learning-and-induction-heads/index.html}.

\bibitem[R{\"a}ukur et~al.(2023)R{\"a}ukur, Ho, Casper, and
  Hadfield-Menell]{raukur2022toward}
T.~R{\"a}ukur, A.~Ho, S.~Casper, and D.~Hadfield-Menell.
\newblock Toward transparent {AI}: A survey on interpreting the inner
  structures of deep neural networks.
\newblock In \emph{IEEE Conference on Secure and Trustworthy Machine Learning
  (SaTML)}, 2023.

\bibitem[Rogers et~al.(2020)Rogers, Kovaleva, and Rumshisky]{rogers2020primer}
A.~Rogers, O.~Kovaleva, and A.~Rumshisky.
\newblock A primer in {BERT}ology: What we know about how {BERT} works.
\newblock \emph{Transactions of the Association for Computational Linguistics},
  8:\penalty0 842--866, 2020.

\bibitem[Scherlis et~al.(2022)Scherlis, Sachan, Jermyn, Benton, and
  Shlegeris]{redwood2022polysemanticity}
A.~Scherlis, K.~Sachan, A.~S. Jermyn, J.~Benton, and B.~Shlegeris.
\newblock Polysemanticity and capacity in neural networks.
\newblock \emph{arXiv preprint arXiv:2210.01892}, 2022.

\bibitem[Vaswani et~al.(2017)Vaswani, Shazeer, Parmar, Uszkoreit, Jones, Gomez,
  Kaiser, and Polosukhin]{vaswani2017attention}
A.~Vaswani, N.~Shazeer, N.~Parmar, J.~Uszkoreit, L.~Jones, A.~N. Gomez,
  {\L}.~Kaiser, and I.~Polosukhin.
\newblock Attention is all you need.
\newblock In \emph{Advances in Neural Information Processing Systems}, 2017.

\bibitem[Wang et~al.(2023)Wang, Variengien, Conmy, Shlegeris, and
  Steinhardt]{wang2022interpretability}
K.~Wang, A.~Variengien, A.~Conmy, B.~Shlegeris, and J.~Steinhardt.
\newblock Interpretability in the wild: a circuit for indirect object
  identification in {GPT-2} small.
\newblock In \emph{International Conference on Learning Representations
  (ICLR)}, 2023.

\bibitem[Wei et~al.(2022)Wei, Chen, and Ma]{wei2022statistically}
C.~Wei, Y.~Chen, and T.~Ma.
\newblock Statistically meaningful approximation: a case study on approximating
  {Turing} machines with transformers.
\newblock In \emph{Advances in Neural Information Processing Systems}, 2022.

\bibitem[Weiss et~al.(2021)Weiss, Goldberg, and Yahav]{weiss2021thinking}
G.~Weiss, Y.~Goldberg, and E.~Yahav.
\newblock Thinking like transformers.
\newblock In \emph{International Conference on Machine Learning (ICML)}, 2021.

\bibitem[Yang et~al.(2019)Yang, Du, and Hu]{yang2019evaluating}
F.~Yang, M.~Du, and X.~Hu.
\newblock Evaluating explanation without ground truth in interpretable machine
  learning.
\newblock \emph{arXiv preprint arXiv:1907.06831}, 2019.

\bibitem[Yang and Kim(2019)]{yang2019benchmarking}
M.~Yang and B.~Kim.
\newblock Benchmarking attribution methods with relative feature importance.
\newblock \emph{arXiv preprint arXiv:1907.09701}, 2019.

\bibitem[Zhou et~al.(2022)Zhou, Booth, Ribeiro, and Shah]{zhou2022feature}
Y.~Zhou, S.~Booth, M.~T. Ribeiro, and J.~Shah.
\newblock Do feature attribution methods correctly attribute features?
\newblock In \emph{AAAI Conference on Artificial Intelligence}, 2022.

\end{thebibliography}

\newpage\appendix
\section{Applications and Limitations of \tracr}\label{app:discussion}

We provide an open-source implementation of \tracr because we think it has many potential applications in interpretability research. In this section, we discuss applications we see for \tracr and compiled transformers more generally and reflect on the current limitations of \tracr and how they can be addressed.

\subsection{Applications of compiled models in interpretability research}\label{app:applications}

Compilers like \tracr allow researchers to set up controlled experiments that test specific hypotheses about the computational structure of transformers. In this way, it acts as a laboratory for research in interpretability, enabling research that might otherwise be intractable.

\paragraph{Understanding model phenomena and developing new techniques.} Compiled models can be used as a testbed for studying how learning affects circuits, and developing new approaches for interpreting transformer models. This is the approach we demonstrate in this work in \cref{sec:compression}, where we successfully induce superposition in compressed \tracr models. Future work could analyse superposition in \tracr models, extending previous work in toy models~\citep{anthropic2022toymodels,redwood2022polysemanticity}. In particular, \tracr allows studying how the structure of computation implemented by a model affects which features will be stored in superposition. One goal for this line of research could be to predict how a specific \tracr model will be compressed, which features will be stored in superposition and how. A complementary approach is to try reversing the superposition induced by a compression procedure, e.g., using ideas from compressed sensing and dictionary learning~\citep{donoho2006compressed,aharon2006k}.

\paragraph{Test cases for interpretability tools.} Compiled models serve as a natural foundation for testing the faithfulness \citep{jacovi2020towards} of an explanation, and provide a way to falsify  \citep{leavitt2020towards} the explanations given by interpretability techniques that aim to describe the inner workings of models.

For instance, classifier probes \citep{belinkov2022probing, bau2017network} aim to determine the locations in the model where particular features are represented. A simple example of this approach is training linear classifiers using intermediate activations of a subject model as inputs. The performance of these classifiers at predicting some feature using activations from layer $L$ is then taken as a proxy for the extent to which the feature is represented at that layer. Applying this method and correctly interpreting its results is challenging \citep{belinkov2022probing}. \tracr-compiled models provide an opportunity to see what these methods say about models whose representations we understand fully, contextualising their results on real models.

Ultimately, compiled models could be used to build libraries of test cases for interpretability tools, which could in turn enable quantitative evaluation metrics.

\paragraph{Replacing model components.} Another way to evaluate our understanding of how a model works is to replace parts of the model with hand-coded components. For example, \citet{nanda2022grokking} test their understanding of how a transformer implements modular addition by replacing components of the model with their own idealised implementation and find that this can \emph{increase} downstream performance, which is strong evidence that the proposed explanation is correct. While \tracr compiles an algorithm into a full transformer model, it could be adapted to only compile part of a model to replace part of a trained model. This could make it easier to evaluate our understanding of a large model.

\paragraph{Building intuition for algorithms implementable by transformers.} \citet{weiss2021thinking} highlight that RASP can be used to gain intuition for how transformers might implement certain tasks. \tracr is a natural next step in this direction, spelling out the relationship between the program and a transformer implementing it in complete detail. We caution, however, that \tracr is but one approach to doing so, while real learned models could exhibit greater variety in their algorithms.

\subsection{Limitations of RASP and \tracr}\label{app:limitations}

RASP and \tracr are limited in terms of expressivity, efficiency and realism compared to real transformer models. Many of these limitations could be overcome in future versions of \tracr.

\paragraph{Expressivity.} RASP is designed for algorithmic tasks that map an input sequence to a discrete output sequence. However, current language models usually map a sequence of input tokens to a probability distribution over the next token. Circuits in real models often consist of components that increase or decrease the probability of some tokens based on previous tokens \citep{wang2022interpretability}. RASP, and hence \tracr, cannot model such "probabilistic" computation, but could potentially be extended to support it.
RASP only uses binary attention patterns, which inherently limits the range of algorithms it can implement \citep{merrill2022saturated}. A way to extend RASP to support numeric attention patterns is discussed in \citet{weiss2021thinking}.

\paragraph{Efficiency.} \tracr models store all variables in orthogonal subspaces of the residual stream. Even if a variable is only used in part of the computation, \tracr reserves a subspace of the residual stream for it in all layers of the model. Real models use a more compressed representation and likely reuse dimensions for multiple features. Improved versions of the compression procedure discussed in \Cref{sec:compression} could address this limitation, as would using a constraint optimisation solver instead of a heuristic for layer allocation.

\paragraph{Realism.} \tracr constructs layers from hand-coded parameter matrices. This is both unrealistic and inefficient, but could be addressed by learning the layers in isolation, then assembling them into a full model manually. Similarly, instead of manually splitting the $\Wqk$ and $\Wov$ matrices, matrix factorisation could be used to get more efficient solutions. Also, \tracr models align their features with the computational basis. This is unrealistic, and makes the resulting models easy to interpret just by inspecting the residual stream activations. Rotating the basis of the compiled model is a straightforward way to address this if obfuscation is needed; compression would be an even more comprehensive approach.

While all of these issues could be overcome in a more sophisticated compiler, there are fundamental limitations on the role compiled models can play. Compiled models are an intermediate step between very simple toy models and real learned models. They help us understand ideas and methods, but results in compiled models do not necessarily generalise to real models. Compared with real models, compiled models will always be simpler. For example, we will likely never compile full-fledged language models. Compiled models will be more likely to be intepretable (e.g., the axis-aligned orthogonal residual stream bases in \tracr), and more likely to fit into existing paradigms for thinking about transformers. When using them to evaluate interpretability tools, we should be careful to make sure that the tools do not exploit this, treating such evaluations as a minimum bar rather than a full validation of a technique. Conversely, some methods might conceivably rely on features present in real models but not in compiled models.

\section{Modifications to RASP}\label{app:rasp_modifications}
\paragraph{Disallow arbitrary selector combinations.}
RASP allows boolean combinations of selectors; however, real transformers have no natural analogue.
Combining selectors with different input variables is particularly problematic.
For example, in RASP we can define a selector
\begin{lstlisting}[language=rasp,numbers=none]
  select(A, B, ==) and select(C, D, ==)
\end{lstlisting}\vspace{-0.5em}
using four s-ops $\texttt{A}$, $\texttt{B}$, $\texttt{C}$ and $\texttt{D}$. However, a real attention pattern only has two input vector spaces. There is no straightforward and efficient construction for representing arbitrary compositions of selectors  (\cref{app:theory}).
Because of this, we restrict RASP to selectors with only two input variables. In practice, this limitation seems not severe. In particular, we could implement programs to solve all tasks described by \citet{weiss2021thinking}.

If a composite selector cannot be avoided, one can always refactor it into an atomic selector by first using s-ops to create a product spaces over the inputs. In the example above, we'd construct two s-ops whose values are pairs over values of $\texttt{A}$, $\texttt{B}$ and $\texttt{C}$, $\texttt{D}$ respectively. Then, we could construct an atomic selector operating on these composite s-ops:

\begin{lstlisting}[language=rasp,numbers=none]
  select(
    product(A, B), 
    product(C, D), 
    lambda (a,b), (c,d): a==c and b==d
  )
\end{lstlisting}\vspace{-0.5em}

While this refactoring can be done mechanically, and would naturally generalise to arbitrary selector combinations, we chose not to include it in our compiler implementation for two reasons. First, without compression it is inefficient: the s-op dimensions scale as a product of the input s-op dimensions. Second, doing this automatically would break the 1-1 correspondence between selectors in RASP and attention heads in the compiled model: the compound s-ops require MLP blocks.

\paragraph{Encoding annotations.}
A compiled model needs to pass information between layers.
In a transformer, it is natural to do this in the residual stream \citep{elhage2021mathematical}.
However, our compiler must decide how to represent information in the residual stream.
For simplicity, we only use categorical and numerical encodings.
We encode categorical variables as one-hot vectors in a dedicated subspace of the residual stream.
We encode numerical variables as the magnitude of a dedicated one-dimensional subspace of the residual stream.
We require each s-op to be either categorical or numerical and augment RASP to annotate s-ops with the desired encoding.
S-ops are categorical by default.

Even when both categorical and numerical encodings are possible for the same information, categorical encoding generally uses more dimensions and often requires an extra decoding step.
However, some aggregate operations only work with one type of encoding.
For instance, aggregation with a mean across token positions is natural for numerical encodings but not categorical ones.

\looseness -1 \paragraph{Beginning of sequence token.}
Transformers often assume any input sequence starts with a dedicated ``beginning of sequence'' token (BOS).
We make the BOS token mandatory in RASP because it is crucial when implementing arbitrary attention patterns.
In particular, RASP allows selectors that can produce all-zero rows; this is convenient when programming in RASP, but the softmax makes this behaviour impossible in a real attention head. In these situations, we use the BOS token as a ``default" position to attend to: it is attended to iff no other token is. This allows the non-BOS part of the sequence to emulate the intended RASP behaviour. In our case, this choice comes from practical considerations; but, interestingly, real models sometimes show similar behaviour \citep[e.g., see][]{elhage2021mathematical}.

\section{Reading Model Output Figures}
\label{app:figure_guide}

In the main paper and \Cref{app:more_programs}, we show figures of a forward pass in a compiled model. We found that these figures can be confusing to read at first, especially as the compiled models get bigger. This section serves as a reference for how to interpret these figures.

As an example, let us walk through the figure for the \texttt{frac\_prevs} model from \Cref{fig:frac_prevs_tracr}:

\includegraphics[width=0.6\linewidth]{frac_prevs_tracr}

The figure has 5 panels, each of which shows the content of the residual stream after the corresponding layer in the model. This allows us to follow what the model does step-by-step. The residual stream has size \texttt{[sequence length x dimensionality]}, therefore we visualize it as a 2-dimensional heatmap. In this example, we have a size of $5 \times 14$ including the BOS token position.

We show a forward pass for a specific input sequence \texttt{[bos, x, a, c, x]}. On the x-axis of each panel we label the token positions with the corresponding input token. The y-axis of the plot contains the dimensions of the residual stream. Thanks to our knowledge of the program the model implements, we can label each dimension according to what it encodes. Dimensions starting with `tokens' contain the (categorical) input embeddings. Dimensions starting with `indices' contain the (categorical) position embeddings. Labels that contain a `:' are dimensions that correspond to a categorical (`one-hot') enoding and the value after the `:' is the value encoded in this dimension. Labels without a `:' mean that this dimension encodes a numerical value. In each of the four panels we show the full residual stream content as a heatmap. Entries that were changed by the layer corresponding to the panel are highlighted with a red border.

Let's go through the plot step by step and map it to the code we used to compile this model:
\begin{lstlisting}[
  language=rasp,
  numbers=none
]
is_x = (tokens == "x")
prevs = select(indices, indices, <=)
frac_prevs = aggregate(prevs, is_x)
\end{lstlisting}

In the leftmost panel we see the residual stream after the input embedding layer. It contains the categorical encoding of the @tokens@ and the categorical encoding of the @indices@. For example at the token position of the `a' token, the dimension \texttt{tokens:a} contains $1$, and the dimension \texttt{indices:1} contains $1$. The auxiliary dimension \texttt{one} contains $1$ at every token position.

The first attention layer is a no-op, so the residual stream afterwards (shown in the second panel) is the same as before. No entry is highlighted.

MLP 1 computes the first line of the rasp program \lstinline[language=rasp]{is_x = (tokens == "x")}. It writes the result into a numerical dimension in the residual stream labelled with \texttt{is\_x}. For this concrete sequence the layer writes a $1$ into both token positions that contain the token `x' in the input.

Attn 2 computes the select-aggregate operations in lines 2 and 3 of the RASP program. It computes the fraction of previous `x' tokens, and writes the result into a single dimension labelled \texttt{frac\_prevs}. It writes values between 0 and 1 in all token positions except for the BOS token position. For this example the result will be \texttt{[1, 1/2, 1/3, 1/2]}.

The final MLP 2 layer is a no-op again and does not change anything in the residual stream. The output unembedding layer will then read the result from the \texttt{frac\_prevs} dimension.

\section{\tracr Implementation Details}\label{app:compiler_details}

This section highlights a few more implementation details of \tracr. We describe how we construct MLP and attention blocks, how we implement the selector width primitive, and how we extend RASP and \tracr to use causal attention. For the full implementation and documentation, refer to the code repository at \githublink.

\subsection{MLP and Attention Blocks}\label{app:encoding_details}

For MLP layers, we distinguish between \texttt{Map} operations with a single input and output and \texttt{SequenceMap} operations with two inputs and one output. We can recursively represent functions with more than two inputs using \texttt{SequenceMap}s.

We translate \texttt{Map}s with categorical inputs and outputs to MLPs that act as a lookup table. \texttt{SequenceMap}s with categorical inputs and outputs become MLPs where the first layer maps to an encoding of all pairs of inputs and the second layer acts as a lookup table.

For numerical inputs and outputs, we explicitly construct MLP layers as universal function approximators. In these MLPs, the first layer discretises the input, and the second layer maps each discrete bucket to a corresponding output value. We know which input/output values can occur, so we can choose the discretisation around these known input values to minimise the approximation error.

We now turn our attention to the attention blocks, which we construct from RASP selectors.

\renewcommand{\k}{\mathbf k} \newcommand{\q}{\mathbf q}

We first construct the $\tilde W_{QK}$ matrix to implement the desired attention pattern in the attention logits. We will refer to this as the \emph{direct attention matrix}. This matrix has low rank, with its row space being the part of the residual stream where the query s-op is stored, and the column space being where the key s-op is stored. We adjust the direct attention matrix matrix by adding a rank-one update $W_{BOS}=\beta_{BOS}x_{\texttt{one}}x_{\texttt{tokens:bos}}^\intercal$ with $\beta_{BOS}=1$ or $\beta_{BOS}=\frac{1}{2}$, to ensure that the BOS token is attended to either always, or whenever no other token is. ($x_{\texttt{one}}$ and $x_{\texttt{tokens:bos}}$ here are unit vectors for the special embedding dimensions introduced in \Cref{sec:manual_transformers}.) We then scale up the matrix by an inverse-temperature parameter $T^{-1}$ (100 by default), getting $\Wqk=T^{-1}(\tilde W_{QK} + W_{BOS})$. As a result, the attention weights $A_{ij}=\operatorname{softmax}\left(\q_i^\intercal\Wqk\vec\k\right)_j=\exp(\q_i^\intercal\Wqk\k_j) / \sum_{j'}\exp(\q_i^\intercal\Wqk\k_{j'})$ are very close to $1/\#\{\text{selected tokens}\}$ on selected tokens and 0 elsewhere.

The $\Wov$ matrix maps the value input to the corresponding output dimensions. Attention layers only support categorical key and query inputs. The value inputs can be numerical or categorical. We can only use categorical values if the head never attends to more than one token.

\subsection{Selector Width Primitive}
RASP provides the selector width primitive, which counts the number of $1$s in each row of a selector. It provides an alternative to \texttt{aggregate} for processing selectors.

\citet{weiss2021thinking} provide a selector width implementation in pure RASP, making it not necessarily a language primitive. However, the most efficient implementation uses the BOS token, which exists \tracr but is not exposed to the RASP program.

Therefore, \tracr translates selector width directly into an efficient implementation in \craft consisting of an attention layer and an MLP layer.
The attention layer implements an attention pattern that matches the selector to compute the width of. It uses the BOS token as value input, resulting in the attention head computing $x = 1/(1+w)$ where $w$ is the desired selector width output. The next MLP layer then computes $w = 1/x-1$ and cleans the BOS token position.

\subsection{Causal Attention}

Most transformer models used in practice use causal attention, i.e., they apply a mask to the attention patterns that allows the model to attend only to previous tokens. This allows training the models autoregressively.
However, RASP assumes non-causal (i.e. bidirectional) attention by default. While all models discussed in the main paper use non-causal attention, \tracr also supports causal attention.

To enable this, we extend RASP to support causal attention via a flag set during evaluation. To evaluate a RASP program in the causal evaluation mode, we apply a causal mask to the output of each selector. Causal evaluation changes the semantics of some RASP operations, and, in general, it is necessary to adapt RASP programs to function with causal attention. For example, the \texttt{frac\_prevs} program no longer needs to compute a causal mask manually. However, for example, the \texttt{length} implementation by \citet{weiss2021thinking} no longer correctly computes the length of a sequence because it requires attending to future tokens.

Similarly, \tracr has a flag to enable causal compilation. Most of the compilation process does not change, and we only need to ensure to compile selectors to causal attention heads.

\section{Compression Training Details}\label{app:training_details}

We implemented the compression described in \Cref{sec:compression} in Jax on top of the Haiku transformer implementation that comes with \tracr. 
We train $W$ using the AdamW optimizer (implemented in Optax) with a weight decay factor of $0.1$, and parameters $\beta_1=0.9, \beta_2=0.99$. We train for $3 \times 10^5$ steps with a batch size of $256$. We decay the learning rate linearly from $10^{-3}$ to $10^{-6}$ over the first half of training.
Each compression run requires between 1 and 4 hours of run time on two CPU cores (depending on the size of the model to compress).

\section{Theoretical Results on Combining Attention Heads}\label{app:theory}

The RASP language permits combining arbitrary selectors elementwise using boolean operators, such as @and@, @or@, and @not@. It is not immediately obvious what operators can be implemented given the way we encode selectors as attention matrices $\Wqk$, as described in \Cref{app:encoding_details}.

First, let's consider @not@ operator for a selector @select(query, key, pred)@ with given direct attention matrix $\tilde W_{QK}$. One way to implement @not select(query, key, pred)@ is to note that it's equivalent to @select(query, key, not pred)@. Another is to use a transformed direct attention matrix $\tilde W_{QK}^{\ptt{not}}=-\tilde W_{QK}$, alongside a $\beta_{BOS}^{\ptt{not}}$ that's 0 or $-\frac{1}{2}$. 

Next, let's consider the @and@ operator on two selectors @select(query_a, key_a, pred_a)@ and @select(query_b, key_b, pred_b)@ whose direct attention matrices $\tilde W^A_{QK},W^B_{QK}$ are given, and produce 0-1 attention logits. We can observe that taking $\tilde W^{\ptt{and}}_{QK}=\tilde W^A_{QK} + \tilde W^B_{QK}$ results in attention logits taking value 2 when both selectors are active, and at most 1 otherwise; so by the same procedure in \Cref{app:encoding_details}, with $\beta^{\ptt{and}}_{BOS}$ taking value $\frac{3}{2}$ or 2, we can construct $W^{\ptt{and}}_{QK}=T^{-1}(\tilde W^{\ptt{and}}_{QK} + W^{\ptt{and}}_{BOS})$ that produces the desired attention pattern in the post-softmax attention weights.

We can compose these constructions, negating the two given selectors before combining them with @and@, to get @nor@, with $\tilde W_{QK}^{\ptt{nor}}=-\tilde W^A_{QK}-\tilde W^B_{QK}$ and $\beta^{\ptt{and}}_{BOS}$ taking value $-\frac{1}{2}$ or 0, resulting in an implementation of @select(query_a, key_a, pred_a) nor select(query_b, key_b, pred_b)@.

So far these are fairly natural constructions -- the boolean operators @not@ and @and@ can be used to construct all other possible boolean operators, so we might expect that indeed all combinations of selectors via boolean operators can be compiled to transformer weights this way.

Alas, it is not so. Unlike the implementation of @not@, the implementations of @and@ and @nor@ above did not result in a direct attention matrix that produces the correct pattern (potentially shifted by a constant) in the attention logits, but rather only in the attention weights after temperature-adjusted softmax, meaning they cannot be composed further to produce arbitrary logical statements.

If we were to try to implement @or@, the easiest way would be to negate the @nor@ by composing the transformations -- but the resulting $\tilde W_{QK}^{\ptt{or}}=-(-\tilde W^A_{QK}-\tilde W^B_{QK})$ is actually the same direct attention matrix we used for @and@. This produces attention logit 1 or 2 where the selectors' @or@ is active, and 0 where it isn't. However, the temperature adjustment with $T^{-1}\gg 1$ that forces the attention to be near-zero where neither selector is active will then also do the same thing when only one selector is active, so the attention weights will be different between tokens where both selectors are active versus only one selector.

In fact, this obstruction to implementing @or@ can be generalized, as follows.

\begin{lemma}
	Consider two selectors @select(query_A, key_A, pred_A)@ and @select(query_B, key_B, pred_B)@, with direct attention matrices $\tilde W^A_{QK}$ and $\tilde W^B_{QK}$. For ease of analysis, let's suppose @query_A@, @key_A@, @query_B@, and @key_B@ are stored in separate, orthogonal subspaces $Q_A$, $K_A$, $Q_B$, $K_B$.
	
	Now suppose there exists an attention matrix $\tilde W^{\ptt{or}}_{QK}$, with row space contained in $Q_A+Q_B$ and column space contained in $R_A+R_B$, that, after adjustment by some BOS logit offset $\beta^{\ptt{or}}_{BOS}$ and some temperature $T\rightarrow 0$, produces attention weights converging to the normalized selector weights for @select(query_A, key_A, pred_A) or select(query_B, key_B, pred_B)@. Then, these selectors are not generic -- they satisfy some very limiting constraints about their predicates.
\end{lemma}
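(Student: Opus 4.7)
The plan is to exploit the fact that the row space $Q_A+Q_B$ and column space $K_A+K_B$ of $\tilde W^{\ptt{or}}_{QK}$ split into four mutually orthogonal pieces. Decomposing $\tilde W^{\ptt{or}}_{QK}=M_{AA}+M_{AB}+M_{BA}+M_{BB}$, where $M_{XY}$ has row space in $Q_X$ and column space in $K_Y$, the attention logit at query position $i$ and key position $j$ becomes a sum of four bilinear forms. Fixing $i$ and varying $j$, the logit decomposes additively as $u_i(k_A^j)+v_i(k_B^j)$: the function $u_i$ absorbs the two blocks ($M_{AA}$ and $M_{BA}$) that read the query's $Q_A$/$Q_B$ components against $k_A^j$, while $v_i$ absorbs the two blocks ($M_{AB}$ and $M_{BB}$) that read them against $k_B^j$.

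Next I would translate the $T\to 0$ convergence of the softmax into pointwise conditions. For each query $i$ with non-empty selected set, convergence to the normalized @or@ weights requires every $j$ with $p_A(q_A^i,k_A^j)=1$ or $p_B(q_B^i,k_B^j)=1$ to attain a common maximal logit $L_i$, and every other $j$ to have strictly smaller logit. Writing $S_A^i=\{k:p_A(q_A^i,k)=1\}$ and analogously $S_B^i$, the core case analysis is as follows: if both $S_A^i,\bar S_A^i$ and $S_B^i,\bar S_B^i$ are non-empty, then picking any $k_A^*\in S_A^i$ and letting $k_B$ range makes the @or@ condition hold at every such key, forcing $v_i(k_B)=L_i-u_i(k_A^*)$ to be constant on $\mathcal{V}_{k_B}$; symmetrically $u_i$ is constant on $\mathcal{V}_{k_A}$. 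But then $u_i+v_i$ is globally constant, contradicting the required strict inequality on the non-empty set $\bar S_A^i\times\bar S_B^i$. So at every query $i$ at least one of $p_A(q_A^i,\cdot)$ or $p_B(q_B^i,\cdot)$ must be constant in its key argument.

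To lift this row-wise conclusion to a statement about the underlying predicates, I would assume the input distribution is generic enough that every combination $(\alpha,\gamma)$ of query\_A and query\_B values is realized at some position and every pair of key values co-occurs at some other position. The per-query constraint then becomes: for every realizable pair $(\alpha,\gamma)$, either $p_A(\alpha,\cdot)$ is constant on $\mathcal{V}_{k_A}$ or $p_B(\gamma,\cdot)$ is constant on $\mathcal{V}_{k_B}$. If both selectors had even one query value at which their predicate depends non-trivially on the key, picking those two values simultaneously would contradict this. Hence one of the two predicates must be a function of the query alone, which is exactly the "very limiting" constraint the lemma asserts. The BOS logit shift $\beta^{\ptt{or}}_{BOS}$ enters only as a rank-one update to the BOS column and is handled separately from the non-BOS analysis. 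The main obstacle I anticipate is the interplay between the genericity assumption and the finite-$T$ limit: I need both that the required combinations of key values actually co-occur at some position, and that the "strict" versus "equal" logit conditions carry through from approximate softmax weights at finite $T$ to the exact combinatorial constraint on the predicates. Once that is carefully set up, the block-decomposition arithmetic is routine.
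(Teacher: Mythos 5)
Your argument is correct under the same co-occurrence/genericity caveats the paper itself invokes, but it takes a genuinely different route and lands on a different (complementary) necessary condition. The paper's proof fixes witnesses to selector $B$ being non-constant ($q_B^0,k_B^0$ with predicate value $0$ and $q_B^1,k_B^1$ with value $1$), and uses the resulting logit-difference identities across \emph{different} queries to show that each predicate must be rank-1, i.e.\ of the form $\texttt{query\_pred}(q)\wedge\texttt{key\_pred}(k)$; it explicitly leaves open whether the rank-1 case is implementable. Your proof instead works \emph{per query}: the block decomposition makes the logit additively separable, $u_i(k_A)+v_i(k_B)$, and the tie-at-the-max condition forces both summands constant whenever both restricted predicates are non-trivial, contradicting strictness on $\bar S_A^i\times\bar S_B^i$; lifting over all realizable query pairs yields that one predicate must ignore its key entirely. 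Neither conclusion implies the other (a rank-1 predicate such as $[q{=}1]\wedge[k{=}1]$ is not query-only, and a query-only condition on one selector says nothing about the rank of the other), so the two arguments extract different facets of the obstruction --- and your version in fact settles part of what the paper defers, since it shows the disjunction is unimplementable even when both predicates are rank-1, provided each genuinely depends on its key. One point to make explicit when writing this up: the steps that force $u_i$ and $v_i$ to be constant need the relevant key-value combinations to co-occur with the query \emph{within a single input sequence} (the equal-max condition is per forward pass), which is a slightly stronger assumption than pairwise realizability across positions; the paper relies on the same kind of assumption and flags it only after the proof, so this is a caveat to state rather than a gap.
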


\begin{proof}
	\newcommand{\Wor}{\tilde W_{QK}^{\ptt{or}}}
	Let's begin by assuming the second selector, $B$, is not constant, selecting some tokens and not-selecting other tokens. This implies the existence of basis vectors $\q_B^0,\q_B^1\in Q_B$ and $\k_B^0,\k_B^1\in K_B$ such that ${\q_B^0}^\intercal \tilde W^B_{QK}\k_B^0=0$ and ${\q_B^1}^\intercal \tilde W^B_{QK}\k_B^1=1$. Holding these constant, consider some basis vectors $\q_A\in Q_A$ and $\k_A,\k'_A\in K_A$. Then, for query vector $\q_A+\q_B^1$, all tokens with key vector $\k_A+\k_B^1$ or $\k'_A+\k_B^1$ must be selected, which means they must have equal attention logits. Therefore, $(\q_A+\q_B^1)^\intercal \Wor (\k_A+\k_B^1)=(\q_A+\q_B^1)^\intercal \Wor (\k'_A+\k_B^1)$, so $\q_A^\intercal \tilde W_{QK}^{\ptt{or}}\k_A = \q_A^\intercal \tilde W_{QK}^{\ptt{or}}\k'_A$.
	
	Now, consider $\k=\k_A+\k_B^0$, $\k'=\k'_A+\k_B^0$, $\q=\q_A+\q_B^0$, and, for some basis vector $\q_A'\in Q_A$, let $\q'=\q'_A+\q_B^0$. We have logit differences $\q^\intercal\Wor\k'-\q^\intercal\Wor\k={\q_B^0}^\intercal\Wor(\k'-\k)=\q'^\intercal\Wor\k'-\q'^\intercal\Wor\k$. Therefore, among tokens where @key_B@ has vector $\k_B^0$ (let's call these $\k_B^0$-tokens), the tokens that have highest logit for query vector $\q'$ are the same as those for query vector $\q$. However, the selected tokens among the $\k_B^0$-tokens are either none of them, or exactly those with the highest logit (which depends on @key_A@). Because of the definition of @or@, $\k_B^0$-tokens are selected exactly if @select(query_A, key_A, pred_A)@ would select them.
	
	Putting the above observations together, it follows that for @query_A@ vectors $\q_A$ and $\q'_A$, @pred_A@ will either select no keys for one of them, or will select the same keys for both of them. In other words, @pred_A@ must be rewritable in the form @query_pred_A(query_A) and key_pred_A(key_A)@. Equivalently, @pred_A@'s matrix has rank 1; we can say in short that @pred_A@ is a rank-1 predicate, or that @select(query_A, key_A, pred_A)@ is a rank-1 selector.
	
	If we suppose our initial assumption to be false, then @pred_B@ is constant, and can thus be just as well rewritten to be a predicate of @query_A@ and @key_A@; then, it is easy to derive the necessary $\Wor$ from @select(query_A, key_A, pred_A or pred_B)@.
	
	We can repeat the argument interchanging the selectors, to conclude that either the operation is trivial (because one predicate is constant), or both selectors must be rank-1.
\end{proof}

The above conclusion may be averted in the case that we have a priori information that certain values of $\q_A,\k_A,\q_B,\k_B$ cannot co-occur, or if some of the input s-ops are shared. We leave exploring that, as well as whether @or@ can be implemented in the case of rank-1 predicates, to future work.

A notable special case of the above is the case where @query_A@ and @query_B@ compute the same s-op, and @key_A@ and @key_B@ also compute the same s-op. (They may be the same s-op, or redundant copies.) Then simple rewriting is possible, similarly to the @or@ case explained earlier. For example:

\begin{lstlisting}[language=rasp,numbers=none,basicstyle=\scriptsize\ttfamily]
simplifiable_selector = select(tokens, indices, <=) or select(tokens, "a", ==)
simplified_selector = select(tokens, indices, q <= k or q == "a")
\end{lstlisting}

A similar strategy of matching s-ops can be used to circumvent the lemma and straightforwardly implement operators like @or@, by constructing combined s-ops @query_both@ and @key_both@ with output types representing all pairs of queries and keys of the two selectors. These s-ops may be computed by the preceding MLP -- however, the encodings occupy dimensionality multiplicative in the sizes of the constituent s-op output types, which is an impediment to scaling these circuits very far.

Due to the composability limitations of each approach considered, we did not implement boolean operators acting on selectors, apart from simple cases where the query and key s-ops agree.

\section{More Compiled Models}
\label{app:more_programs}
\label{app:sort_program}

Here, we present a few additional RASP programs and the compiled \tracr models.

\Cref{fig:sort_tracr} shows and extended \texttt{sort} program. It works similarly to the \texttt{sort\_unique} program in \Cref{fig:sort_unique_tracr}, but sorts any sequence of values by a sequence of keys and can handle duplicates occurring in the keys.

\Cref{fig:pair_balance_tracr} shows the \texttt{pair\_balance} program, which computes the difference in the fraction of open and closed parenthesis tokens. We can now use it as a subroutine for the \texttt{dyck-n} program, which checks if a sequence of $n$ different types of parentheses is balanced:

{\textbf{Input}: \texttt{pairs}\hfill}
\begin{lstlisting}[language=rasp]
# Compute running balance of each type of parenthesis
balances = [pair_balance(pair) for pair in pairs]

# If balances were negative anywhere -> parentheses not balanced
any_negative = balances[0] < 0
for balance in balances[1:]:
   any_negative = any_negative or (balance < 0)

select_all = select(1, 1, ==)
has_neg = aggregate(select_all, any_negative)

# If all balances are 0 at the end -> closed all parentheses
all_zero = balances[0] == 0
for balance in balances[1:]:
   all_zero = all_zero and (balance == 0)

select_last = select(indices, length - 1, ==)
last_zero = aggregate(select_last, all_zero)

dyck_n = (last_zero and not has_neg)
\end{lstlisting}

\Cref{fig:dyck2_tracr} shows the compiled \texttt{dyck-2} model for \texttt{pairs} = (``()'', ``\{\}'').

\begin{figure}[p]\centering
{\textbf{Input}: \texttt{keys}, \texttt{vals}, \texttt{min\_key}, \texttt{context\_length}\hfill}
\begin{lstlisting}[language=rasp]
keys = (keys + indices + min_key) / context_length
smaller = select(keys, keys, <=)
target_pos = selector_width(smaller)
sel_sort = select(target_pos, indices, ==)
sort = aggregate(sel_sort, vals)
\end{lstlisting}

\includegraphics[trim={6.2cm 0 0 0},clip,width=0.9\linewidth]{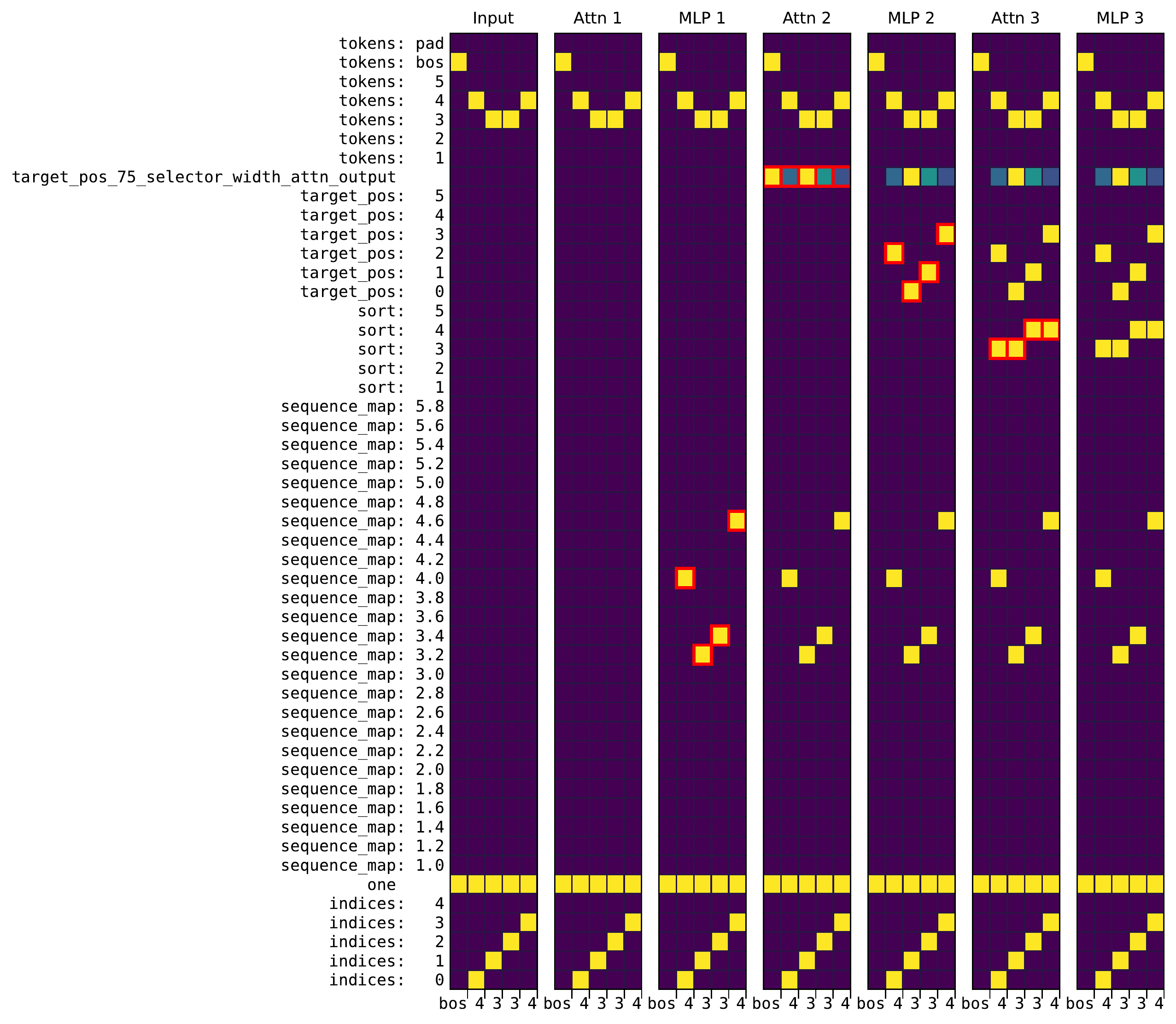}
\caption{Compiled \texttt{sort} program. Attn 1 is a no-op, MLP 1 adds a small multiple of \texttt{indices} to the keys, and the rest of the model essentially implements \texttt{sort\_unique}.}
\label{fig:sort_tracr}
\end{figure}

\begin{figure}[p]\centering
{\textbf{Input}: \texttt{open\_token}, \texttt{close\_token}\hfill}
\begin{lstlisting}[language=rasp]
bools_open = (tokens == open_token)
opens = frac_prevs(bools_open)
bools_close = (tokens == close_token)
closes = frac_prevs(bools_close)
pair_balance = opens - closes
\end{lstlisting}

\includegraphics[trim={0.2cm 0 0 0},clip,width=0.7\linewidth]{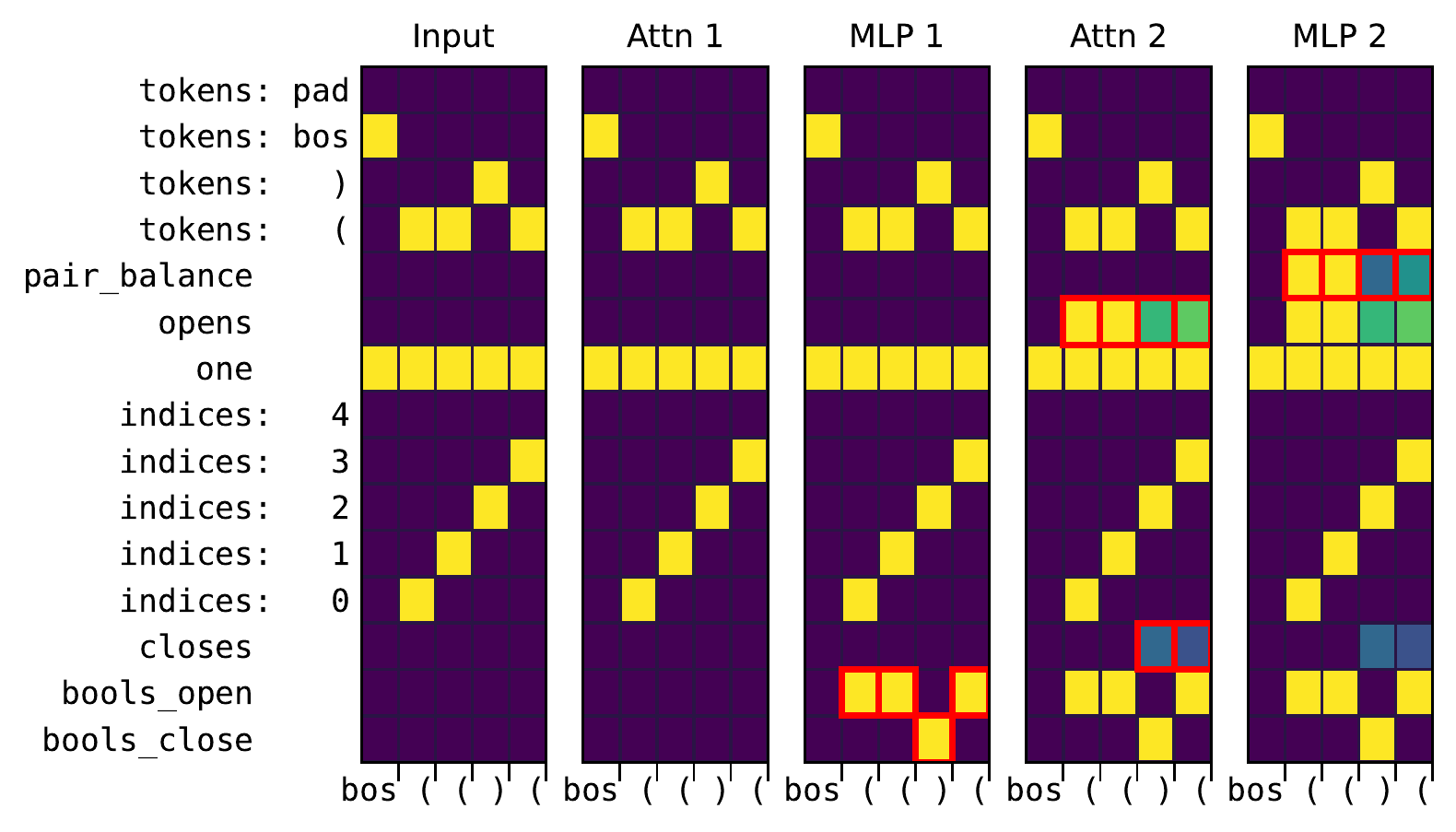}
\caption{RASP program that uses \texttt{frac\_prevs} as a subroutine to compute the fraction of open and closed parenthesis tokens and computes the difference. The compiled model uses \texttt{open\_token} = ``('' and \texttt{close\_token} = ``)''. Note that the compiled model has the same number of layers as the single  \texttt{frac\_prevs} model in \Cref{fig:frac_prevs_tracr}. Attn 1 is still a no-op, MLP 1 and Attn 2 compute both calls to \texttt{frac\_prevs} in parallel, and MLP 2 computes the final result.}
\label{fig:pair_balance_tracr}
\end{figure}

\begin{sidewaysfigure}\centering
\includegraphics[trim={2.2cm 0 0 0},clip,width=0.99\linewidth]{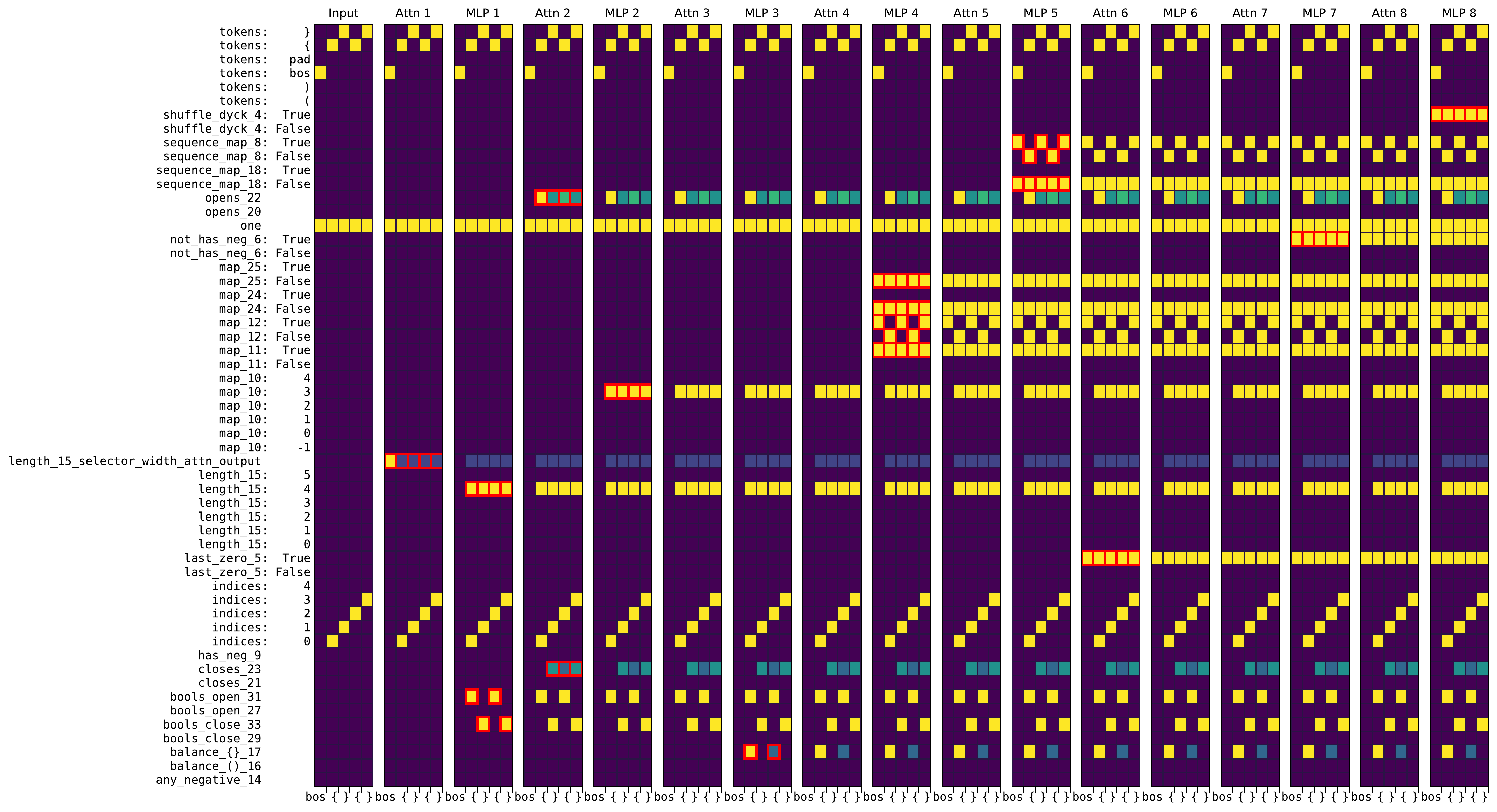}
\caption{Compiled \texttt{dyck-2} program for \texttt{pairs} = (``()'', ``\{\}'').}
\label{fig:dyck2_tracr}
\end{sidewaysfigure}

\end{document}